\documentclass[letterpaper, 10 pt, conference]{ieeeconf}  
\IEEEoverridecommandlockouts

\usepackage{graphics} 
\usepackage{epsfig} 
\usepackage{tabularx}
\usepackage{times} 
\usepackage{amsmath,amsthm} 
\usepackage{amssymb}  
\usepackage{booktabs}
\usepackage{multirow}
\usepackage[colorinlistoftodos]{todonotes}
\usepackage{float}
\usepackage{hyperref}
\usepackage{subcaption}
\usepackage{cite}

\usepackage[ruled,vlined,linesnumbered]{algorithm2e}
\usepackage{algpseudocode}
\usepackage{tikz}
\usepackage[dvipsnames]{xcolor}
\usetikzlibrary{calc}
\usetikzlibrary{arrows.meta, positioning}

\tikzset{
    ->, 
    >=Stealth,
    node distance=2cm and 2.5cm,
    every node/.style={draw, ellipse, minimum width=1.2cm, minimum height=0.8cm},
    every edge/.append style={draw, thick},
    label_node/.style={draw=none, fill=none, text=black}
}

\newcommand{\se}[1]{\textcolor{red}{#1}}

\newcommand{\tpo}{\phi}
\newcommand{\ctpo}{\phi_c}
\newcommand{\events}{\Pi}
\newcommand{\reset}{R}
\newcommand{\guardmap}{G}
\newcommand{\clocks}{C}

\newcommand{\apformulas}{\mathcal{F}}
\newcommand{\dts}{T}
\newcommand{\gtsptwpr}{\mathsf{G}}
\newcommand{\stpo}{\phi_s}
\newcommand{\Levent}{L_{\Pi}}
\newcommand{\Lprop}{L_\text{AP}}

\title{\LARGE \bf
Conditional Timed Partial Orders: An Expressive and Interpretable Framework for Robot Task Specification and Planning
}

\author{
Sebastian Escobar and Morteza Lahijanian
\thanks{Authors are with the Aerospace Eng. Sci. Dept. at University of Colorado Boulder, CO, USA
        {\tt\small \{firstname.lastname\}@colorado.edu}}%
}

\newtheorem{definition}{Definition}

\newtheorem{mytheorem}{Theorem}
\newtheorem{myproblem}{Problem}
\newtheorem{myexample}{Example}
\newtheorem{myproposition}{Proposition}

\begin{document}

\maketitle
\thispagestyle{plain}
\pagestyle{plain}

\begin{abstract}

Timed Partial Orders (TPOs), originally proposed for workflows, provide an interpretable framework for robot task specification with planning algorithms based on mixed-integer linear programming (MILP). However, TPOs are limited in expressivity, capturing only partial-order events with simple timing constraints.
In this paper, we introduce Conditional TPOs (cTPOs), which extend TPOs with richer relative-timing constraints and conditional event activations based on environmental conditions. We show that planning for cTPOs also reduces to an MILP problem; however, the added expressivity results in significantly larger MILPs that can become computationally intractable.
To address this challenge, we propose a decomposition algorithm that partitions a cTPO into smaller sub-TPOs, yielding a sequence of smaller MILP problems. We prove that this decomposition is complete and preserves plan optimality while improving the interpretability of complex tasks. Experimental results demonstrate the effectiveness of cTPOs as a task specification framework and the efficiency of our decomposition approach, achieving up to four orders of magnitude speedup over the monolithic MILP.
\end{abstract}
\section{Introduction}

Robotic autonomy increasingly requires reasoning at the mission level: robots must decide not only \emph{how} to move, but \emph{what} to do, \emph{when} to do it, and under which conditions tasks become required. Missions in domains such as healthcare, aerospace, and field robotics are structured by precedence, deadlines, and conditional obligations. 
However, specification languages capable of representing such requirements must balance competing goals of \emph{expressivity}, human \emph{interpretability}, \emph{formal rigor}, and \emph{computational tractability}~\cite{kress2018synthesis,guo2023recent}. In this work, we seek to improve this balance by extending an existing interpretable specification language while retaining formal guarantees and efficient planning.


In recent years, there has been growing interest in using natural language to specify robotic tasks and missions~\cite{singh2023progprompt,kim2024survey}. Advances in LLMs have further enabled the generation of plans directly from such specifications. These approaches offer substantial expressivity and human interpretability. However, LLM-based methods generally lack explicit formal semantics and, when used directly for planning, provide no guarantees of correctness and completeness~\cite{pmlr-v235-kambhampati24a}.



A widely used class of formal specification languages in robotics is temporal logic (TL)~\cite{kress2018synthesis}, including timed variants Metric/Signal TL (M/S-TL), which explicitly capture timing requirements~\cite{koymans1990specifying,maler2004monitoring}. These specifications admit rigorous planning methods based on timed-automata-theoretic~\cite{alur1994theory,maler2006mitl} or optimization-based reasoning~\cite{kress2018synthesis}, namely Mixed Integer Linear Program (MILP)~\cite{Cardona2023}. While expressive and formal, these specifications require domain expertise and familiarity with logical formulas, limiting their interpretability for untrained users. Also, the expressiveness of M/S-TL comes at a significant computational cost, making planning challenging.


A recent specification language, originally introduced for workflows, is \textit{Timed Partial Orders} (TPOs)~\cite{watanabe2023timed}, which provide a better balance among the aforementioned four competing objectives. TPOs represent missions as directed acyclic graphs (DAGs), where edges encode precedence relations and clocks impose timing constraints between events
(e.g., see Fig.~\ref{fig:hospital_tpo}). This graphical representation allows designers to interpret and modify task requirements directly, while retaining precise formal semantics. Moreover, planning under a TPO specification can be formulated as a variation of Generalized Traveling Salesman Problem 
\cite{noon1988generalized,mingozzi1997dynamic,pop2023comprehensive} and solved using MILP~\cite{watanabe2024optimalplanningtimedpartial}, avoiding expensive construction of timed automata~\cite{alur1994theory}. 
Nevertheless, existing TPOs have two key expressiveness limitations: (i) timing constraints are restricted to precedence-ordered events, and (ii) lack of support for conditional subtasks that depend on environmental events.

In this work, we address both gaps with \emph{conditional TPOs} (cTPOs). cTPOs extend TPOs with generalized clock-difference guards relating arbitrary pairs of events and propositional activation conditions that conditionally enforce tasks and timing constraints based on the realized trajectory, increasing expressivity. Then, we show that the MILP formulation of~\cite{watanabe2024optimalplanningtimedpartial} can be extended to plan for cTPO specifications. To mitigate the increased computational cost of this added expressivity, we introduce \emph{sub-TPO decomposition}, which partitions a cTPO into smaller sub-TPOs,
yielding a sequence of smaller MILP problems.
We prove that our decomposition algorithm is complete and preserves plan optimality, while substantially reducing computation time and improving interpretability by exposing the modular structure of the mission specification. Our case studies demonstrate the expressivity and interpretability of cTPOs and  benchmarks show that decomposition-based planning achieves speedups of up to four orders of magnitude.


In short, our contributions are fourfold: (i) we introduce cTPOs, extending TPOs with timing constraints between arbitrary events and conditional tasks; (ii) we formulate cTPO planning as an MILP; (iii) we develop a complete, optimality-preserving decomposition into sequentially solved sub-TPOs; and (iv) we demonstrate cTPO expressivity and interpretability on robotic missions and show up to four orders of magnitude speedup over the monolithic MILP.
\providecommand{\se}[1]{\textcolor{red}{#1}}

\section{TPOs \& Planning Problem}


We consider complex robotic tasks defined over a set of events with temporal requirements. A formalism that provides easily interpretable visual representations of such tasks is TPOs~\cite{watanabe2023timed}. 
Here, we first formally define TPOs and discuss their limitations in expressing complex tasks. We then introduce Conditional TPOs (cTPOs) to address these limitations and finally formulate the cTPO planning problem.



\subsection{Timed Partial Order (TPO)}
TPOs use partial-order relations together with clocks to capture temporal requirements over events.

\begin{definition}[TPO~\cite{watanabe2023timed}] \label{def:tpo}
    A \emph{Timed Partial Order} (TPO) is a tuple $\tpo = (\events, \prec, \clocks, \mathcal{G}_\clocks, \guardmap, \reset)$, where
    \begin{itemize}
    \item $\Pi = \{e_1, \ldots, e_n\}$ is a finite set of events; 
    \item $\prec \subseteq \events \times \events$ is a strict partial order over events in $\events$;
    \item $\clocks = \{c_1, \ldots, c_m\}$ is a finite set of clocks;
    \item $\mathcal{G}_\clocks = \big\{ \bigwedge_{i \in I} (c_i \bowtie t_i) \mid I \subseteq \{1,\ldots,|\clocks|\}, \, \bowtie \in \{<,\leq, \geq,>\}, \, t_i \in \mathbb{R}_{\geq 0} \big\}$ 
    is the set of timing constraints (guards), expressed as conjunctions of clock bounds;
    \item $\guardmap: \events \to \mathcal{G}_\clocks$ assigns to an event $e \in \Pi$ a timing constraint (guard condition) $\guardmap(e) \in \mathcal{G}_\clocks$; and 
    \item $\reset: \events \to 2^{\clocks}$ is the reset map that, upon execution of an event $e$, resets clocks $\reset(e) \subseteq \clocks$.
    \end{itemize}
\end{definition}

A TPO can be represented as a Directed Acyclic Graph (DAG) whose edges encode the cover relation of $\prec$.
%

\begin{myexample}[Hospital Robot Workflow]\label{ex:hospital}

    Consider a hospital robot tasked with collecting diagnostic results and delivering them to medical staff. The task is: first collect the triage report ($e_1$); then perform the blood draw ($e_2$) and imaging scan ($e_3$) in any order; finally deliver the results to a physician ($e_4$) and the patient file to a social worker ($e_5$) in any order. The blood draw $e_2$ and physician delivery $e_4$ must occur within 15 and 60 minutes, respectively, of $e_1$.

    The precedence relations are $e_1 \prec e_2$, $e_1 \prec e_3$, $e_2 \prec e_4$, $e_3 \prec e_4$, $e_2 \prec e_5$, and $e_3 \prec e_5$. Timing constraints use one clock $C=\{c_1\}$, reset at $e_1$ ($R(e_1) = \{c_1\}$), with guards $G(e_2)=(c_1 \leq 15)$ and $G(e_4)=(c_1 \leq 60)$. This TPO is shown as the black DAG with black clock and guard annotations in Fig.~\ref{fig:hospital_tpo}.
\end{myexample}

A \emph{timed trace} $\tau = (\sigma_1, t^{(1)}) \ldots (\sigma_n, t^{(n)})$ is a sequence of event-timestamp pairs, where each $\sigma_i \in \events$ is distinct and $t^{(i)} \in \mathbb{R}_{\geq 0}$.
A valid execution of a TPO is a timed trace that respects $\prec$ and timing constraints.  
To reason about the latter, we need to track clock readings, which we do via a \emph{valuation} function $\nu: \clocks \to \mathbb{R}_{\geq 0}$.  This is formalized below.

\begin{definition}[TPO Semantics~\cite{watanabe2024optimalplanningtimedpartial}]\label{def:tpo_semantics}
    A \emph{run} of TPO $\tpo$ is a sequence $\rho = (\sigma_1, t^{(1)}, \nu_1) \ldots (\sigma_n, t^{(n)}, \nu_n)$, where each $\sigma_i \in \events$ is distinct and each $\nu_j$ is a clock valuation, satisfying: 
    \begin{itemize}
        \item timestamps are non-decreasing, i.e., $t^{(1)} \leq \cdots \leq t^{(n)}$; 
        \item $\sigma_1, \ldots, \sigma_n$ is a linearization of $\prec$; 
        \item just before $\sigma_j$ fires (is executed), time duration $\Delta t^{(j)} = t^{(j)} - t^{(j-1)}$ elapses, advancing each clock $c \in C$ by $\Delta t^{(j)}$. Then, the resulting valuation 
        $(\nu \oplus \Delta t)(c) = \nu(c) + \Delta t$ must satisfy the guard $\guardmap(\sigma_j)$; and 
        \item after $\sigma_j$ fires, the clocks in $\reset(\sigma_j)$ are reset to zero while all others are unchanged, giving $\nu_j = \text{reset}(\nu_{j-1} \oplus \Delta t^{(j)}, \reset(\sigma_j))$, where 
        $\text{reset}(\nu, \hat{C})(c) = 0$ if $c \in \hat{C}$, otherwise $\nu(c)$.
    \end{itemize}
    A timed trace $\tau = (\sigma_1, t^{(1)}) \ldots (\sigma_n, t^{(n)})$ is \emph{compatible} with $\tpo$, denoted $\tau \models \tpo$, if there exists a corresponding run.
\end{definition}

\begin{figure}[t]
    \centering
    \resizebox{\columnwidth}{!}{
    \begin{tikzpicture}[
        node distance=0.5cm and 3.5cm,
        every node/.style={draw, rounded corners, fill=white!20, minimum width=2em, inner sep=3pt},
        >=stealth
    ]
    \node (e1) {$e_1$: triage};
    \node (e2) [above right=1.0 and 1.0 of e1] {$e_2$: blood draw};
    \node (e3) [right=1.25 of e1] {$e_3$: imaging};
    \node (e4) [right=1.5 of e3] {$e_4$: physician};
    \node (e5) [right=1 of e2] {$e_5$: social worker};

    \node (e6) [below=1.0 of $(e1)!0.5!(e3)$, draw=RedOrange, text=RedOrange] 
        {$e_6$: ECG pickup};
    \node (e7) [right=1.5cm of e6, draw=RedOrange, text=RedOrange] 
        {$e_7$: cardiology delivery};

    \draw[->, thick] (e1) -- (e2);
    \draw[->, thick] (e1) -- (e3);
    \draw[->, thick] (e2) -- (e4);
    \draw[->, thick] (e3) -- (e4);
    \draw[->, thick] (e2) -- (e5);
    \draw[->, thick] (e3) -- (e5);

    \draw[->, thick, RedOrange] (e1) -- (e6) 
        node[midway, below, sloped, label_node, font=\scriptsize] 
        {\textcolor{RedOrange}{$p_c$}};
    \draw[->, thick, RedOrange] (e6) -- (e7) 
        node[midway, below, sloped, label_node, font=\scriptsize] 
        {\textcolor{RedOrange}{$p_c$}};

    \node[label_node, below=0.01cm of e1, font=\scriptsize] {$c_1 := 0$};
    \node[label_node, below=0.01cm of e2, font=\scriptsize] {\textcolor{Purple}{$c_2 := 0$}};
    \node[label_node, below=0.01cm of e3, font=\scriptsize] {\textcolor{Purple}{$c_3 := 0$}};
    \node[label_node, below=0.01cm of e6, font=\scriptsize] 
        {\textcolor{RedOrange}{$c_6 := 0$}};

    \node[label_node, above=0.01cm of e2, font=\scriptsize] {$c_1 \leq 15$};
    \node[label_node, above=0.01cm of e4, font=\scriptsize, align=center] {
        \hspace{-5mm} $c_1 \leq 60$  \\
        \hspace{6mm} \textcolor{Purple}{$\wedge$} \textcolor{Purple}{$|c_2 - c_3| \leq 20$}
    };
    \node[label_node, below=0.01cm of e7, font=\scriptsize] 
        {\textcolor{RedOrange}{$c_6 \leq 15$}};

    \end{tikzpicture}
    }
    \vspace{-6mm}
    \caption{
    \small
    TPO and cTPO specifications for the Hospital Robot examples. The TPO shown 
    in black
    captures the (original) requirements in Example~\ref{ex:hospital}. 
    It is extended to a cTPO by adding the purple clocks and guards to represent the additional requirement R1, and the bottom orange branch to capture requirement R2.}
    \label{fig:hospital_tpo}
    \vspace{-2mm}
\end{figure}

An example of a compatible timed trace with the TPO in Fig.~\ref{fig:hospital_tpo} is 
$\tau_0 = (e_1, 0)(e_2, 10)(e_3, 20)(e_4, 30)(e_5, 40)$.

While expressive and interpretable, TPOs in Def.~\ref{def:tpo} have two major limitations: restricted guard conditions and lack of support for environmental conditions.

\subsection{Conditional TPOs with Complex Guards}
\label{sec:ctpo}

TPOs in Def.~\ref{def:tpo} only allow guard conditions with single-clock bounds (see $G_\clocks$ in Def.~\ref{def:tpo}) and cannot express events triggered by external environmental conditions. For instance, in Example~\ref{ex:hospital}, additional requirements such as
\begin{itemize}
    \item[\bf{R1:}] ``\textit{Blood draw $e_2$ and imaging $e_3$ must be done within 20 minutes of each other.}'' 
    \item[\bf{R2:}] ``\textit{If the robot happens to pass near the Cardiac Wing, it should also pick up ECG report ($e_6$) and deliver it to cardiologist ($e_7$) within 15 minutes.}''
\end{itemize}
cannot be expressed.
We extend TPOs to support such expressions, enriching the properties they can represent while maintaining their strong interpretability.  These extensions, however, have consequences in computation, on which we elaborate in Sec.~\ref{sec:milp} and address in Sec.~\ref{sec:decomp}.

First, we generalize the guard condition $\mathcal{G}_\clocks$ to allow richer, \emph{precedence-free} constraints. Specifically, each guard may also be a conjunction of sums of clock differences, i.e.,

\vspace{-6mm}
\begin{multline}
    \label{eq: new guard}
    \mathcal{G}_\clocks'= 
    \mathcal{G}_\clocks \cup
    \Big\{\bigwedge_{k=1}^N \sum_{i_k \neq j_k \in I_k} (c_{i_k} - c_{j_k}) \bowtie t_k \mid 
    N \leq 2^{|C|}, \\  I_k \subseteq \{1,\ldots, |\clocks|\}, \, t_k \in \mathbb{R}_{\geq 0} \Big\}.
    \vspace{-1mm}
\end{multline}
This enables timing constraints between arbitrary pairs of events, regardless of their precedence relation.  Note that $\mathcal{G}_\clocks'$ allows for conditions such as $|c_2 - c_3| \leq 20$, which precisely captures the additional requirement R1 for Example~\ref{ex:hospital} by extending clocks to $\clocks = \{c_1,c_2,c_3\}$, as shown in Fig.~\ref{fig:hospital_tpo} (purple clocks and guards).

The second extension is motivated by the fact that 
real robotic tasks rarely unfold as a single fixed task structure. Environmental features encountered along the way (e.g., a geologically significant outcrop, or an area triggering a safety protocol) may impose additional tasks with their own precedence and timing requirements, e.g., R2 for Example~\ref{ex:hospital}. 



To capture such behavior, we associate events with logical conditions over the environment. Let $AP = \{p_1, \ldots, p_r\}$ be a finite set of atomic propositions (predicates), where each $p_i$ evaluates to true when the robot's trajectory satisfies an associated environmental condition (e.g., entering a designated area). Furthermore, let $\apformulas(AP)$ denote the set of all Boolean (propositional) formulas over $AP$ formed using $\neg$ (negation), $\wedge$ (conjunction), and $\vee$ (disjunction). 
We allow each event in $\Pi$ to be triggered by a formula in $\apformulas(AP)$.
Intuitively, an event and its timing and precedence constraints are triggered in the task structure of TPO only when its associated formula evaluates to true. Otherwise, the event and its precedence and temporal constraints are ignored. We formalize this extension as a \emph{Conditional TPO} (cTPO).

\begin{definition}[cTPOs]\label{def:ctpo}
    A \emph{Conditional TPO} (cTPO) is a tuple 
    $\ctpo = (\tpo, AP, \alpha)$
    where 
    $\tpo$ is a TPO in Def.~\ref{def:tpo} with guard condition $\mathcal{G}_\clocks'$ in \eqref{eq: new guard}, $AP$ is a finite set of atomic propositions, and $\alpha : \events \to \apformulas(AP)$ is an activation function that assigns, to each event $e \in \events$, a propositional condition (Boolean formula) $\alpha(e) \in \apformulas(AP)$ over $AP$. 
\end{definition}

When $\alpha(e)$ evaluates to true, i.e., $\alpha(e) \equiv \top$, event $e$ and its associated precedence constraints, guard conditions, and clock resets are enforced; otherwise, $e$ is not required. Note that cTPO $\ctpo$ recovers TPO $\tpo$ if $\alpha(e) = \top$ for all $e \in \events$.

The TPO in Fig.~\ref{fig:hospital_tpo} with the bottom orange branch is a cTPO for Example~\ref{ex:hospital} with R1 and R2 requirements, where $\events = \{e_1,\ldots,e_7\}$, $AP = \{p_c\}$, and $\alpha(e_i) = p_c$ if $i \in \{6,7\}$ and $\alpha(e_i) = \top$ if $i\in \{1,\ldots,5\}$ (not shown in the figure).  



For each evaluation of the propositions in $AP$, a cTPO induces a TPO obtained by restricting to the events whose activation conditions evaluate to true. Hence, cTPO semantics are inherited from Def.~\ref{def:tpo_semantics} on this induced substructure.



\begin{definition}[cTPO Semantics]\label{def:ctpo_semantics}

    Let $\omega : AP \to \{\bot, \top\}$ be a valuation of atomic propositions in $AP$.
    The \emph{active event set} induced by $\omega$ is $\events_\omega := \{e \in \events \mid \alpha(e) \equiv \top \text{ under } \omega\}$, and  \emph{active suborder} induced by $\omega$ is $\prec_\omega :=$ $\prec \cap (\events_\omega \! \times \events_\omega)$.
    A \emph{run} of a cTPO $\ctpo$ under $\omega$ is a sequence $\rho = (\sigma_1, t^{(1)}, \nu_1) \ldots  (\sigma_k, t^{(k)}, \nu_k)$, where $k = |\events_\omega|$ and $\sigma_i \in \events_\omega$, satisfying the conditions in Def.~\ref{def:tpo_semantics} with $\prec$ replaced by $\prec_\omega$. A timed trace $\tau$ is \emph{compatible} with $\ctpo$ under $\omega$, denoted $\tau \models \ctpo$, if there exists a corresponding run.
\end{definition}

An example of a compatible timed trace that satisfies the cTPO in Fig.~\ref{fig:hospital_tpo} (Example~\ref{ex:hospital}) under valuation $\omega(p_c) = \top$ is 
$\tau_1 = (e_1, 0)(e_2, 10)(e_3, 20)(e_4, 30)(e_5, 40)(e_6, 50)(e_7, 60)$.

\subsection{Robot Model}
Similar to~\cite{watanabe2024optimalplanningtimedpartial}, we consider a robot whose evolution in the environment is modeled as a Deterministic Transition System (DTS), which is a common abstraction in the literature~\cite{kress-gazit2007wheres-waldo, kress2018synthesis, lahijanian2009}.


\begin{definition}[DTS]\label{def:dts}
    A deterministic transition system (DTS) is a tuple $\dts = (X, x_0, A, \delta, \Delta, \events, \Levent, AP, \Lprop)$, where 
    \begin{itemize}
        \item $X$ is a finite set of states, 
        \item $x_0 \in X$ is the initial state,
        \item $A$ is a finite set of actions, 
        \item $\delta: X \times A \to X$ is the (partial) transition function, 
        \item $\Delta: X \times A \to \mathbb{R}_{\geq 0}$ is the transition duration function, 
        \item $\events$ is a finite set of events, 
        \item $\Levent: X \to \events \cup \{\emptyset\}$ is an event labeling function that maps each state to an event with $L(x_0) = \emptyset$,
        \item $AP = \{p_1, \ldots, p_r\}$ is a finite set of atomic propositions, and
        \item $\Lprop: X \to 2^{AP}$ is a proposition labeling function that maps each state to the atomic propositions it satisfies.
    \end{itemize}
\end{definition}

A \textit{plan} $\gamma = \gamma_0 \gamma_1 \ldots \gamma_{n-1}$ with $\gamma_i \in A$ is a sequence of actions. A plan $\gamma$ is \textit{valid} if it generates a trajectory $s^\gamma = s^\gamma_0 s^\gamma_1 \ldots s^\gamma_n$ such that $s^\gamma_0 = x_0$ and $s^\gamma_{i+1} = \delta(s^\gamma_i, \gamma_i)$.  
The set of all valid plans is denoted $\Gamma$.
The \textit{duration} of $s^\gamma$ is defined as the sum of the transition durations along the trajectory, i.e., $D(s^\gamma) = \sum_{i=0}^{n-1} \Delta(s^\gamma_i, \gamma_i)$. 

A valid plan $\gamma$ induces a timed trace $\tau^\gamma = (L(s^\gamma_0), t_0) \ldots (L(s^\gamma_n), t_n)$, where $t_i = D(s^\gamma_0 \ldots s^\gamma_i)$. 
We say plan $\gamma$ \emph{satisfies} a TPO $\tpo$ defined over events $\events$ if $\tau^\gamma \models \tpo$ per Def.~\ref{def:tpo_semantics}. 
For a cTPO $\ctpo = (\tpo, AP, \alpha)$, we first define valuation $\omega$ for every $p_k \in AP$ to be 
\begin{align}
    \label{eq: valuation function}
    \omega(p_k):= 
        \begin{cases}
            \top & \text{if } p_k \in \Lprop(s^\gamma_i) \text{ for some }  i\\
            \bot & \text{otherwise.}
        \end{cases}
\end{align}
Then, we say $\gamma$ satisfies $\ctpo$ if $\tau^\gamma$ is compatible with $\ctpo$ under $\omega$ per Def.~\ref{def:ctpo_semantics}.

Our goal is to compute a minimum-duration plan that satisfies a given cTPO specification, as formalized below.

\begin{myproblem}[cTPO Plan Synthesis]\label{prob:synthesis}
Given a robot as a DTS $\dts$ 
in Def.~\ref{def:dts}
and
a cTPO task $\ctpo$ 
as in Def.~\ref{def:ctpo}, 
synthesize a valid plan $\gamma^* \in \Gamma$
that 
generates a minimum-duration trajectory satisfying $\ctpo$, i.e., $\gamma^* = \arg\min_{\gamma \in \Gamma} D(s^\gamma) \quad \text{subject to} \quad \tau^\gamma \models \ctpo.$
\end{myproblem}
\providecommand{\se}[1]{\textcolor{red}{#1}}

\section{Approach}\label{sec:approach}


Our approach to Problem~\ref{prob:synthesis} proceeds in three stages. First, we express cTPO compatibility in terms of event timestamps, reducing the problem to a Generalized Traveling Salesman Problem with Time Windows and Precedence Relations (GTSP-TWPR)~\cite{watanabe2023timed,watanabe2024optimalplanningtimedpartial,Lahijanian:RAL-tempral:2018} (Sec.~\ref{sec:gtsp}). Second, we solve the resulting problem exactly via an MILP that extends~\cite{watanabe2024optimalplanningtimedpartial} with proposition tracking and conditional activation constraints (Sec.~\ref{sec:milp}). Third, we improve scalability through a sub-TPO decomposition that solves self-contained cTPO substructures as smaller MILPs and composes their solutions into a globally optimal plan (Sec.~\ref{sec:decomp}).

\subsection{Reduction to GTSP-TWPR}\label{sec:gtsp}

Whether a plan satisfies a cTPO depends only on which events are active, which event-labeled states are visited, in what order, and at what times. Problem~\ref{prob:synthesis} thus reduces to a routing problem: select one location per required event, order the visits, and schedule each to satisfy the timing constraints. To formulate these scheduling bounds, the timing constraints in $\guardmap$ are expressed directly over visit timestamps. Since clocks reset only at events, the value of any clock when an event fires is simply the elapsed time since its most recent reset, i.e., the difference of two event timestamps. Substituting this into every guard conjunct rewrites each timing requirement as a linear inequality over visit timestamps, yielding the following characterization.

\begin{myproposition}[Timestamp Characterization]
\label{prop:timestamps}
Let $\ctpo$ be a cTPO and $\omega : \mathit{AP} \to \{\bot, \top\}$ a fixed valuation. The guard conditions of a run of $\ctpo$ under $\omega$ are satisfied iff there exist constants $a_i, b_i, \ell_{ij}, u_{ij}, f_{kl}, w_{kl} \in \mathbb{R}_{\geq 0} \cup \{\infty\}$ determined by $\guardmap$ and $\reset$ such that
\begin{subequations}
    \begin{align}
      &t_i \in [a_i,\, b_i] && \forall e_i \in \events_\omega, \label{eq:time_abs} \\
      &t_j - t_i \in [\ell_{ij},\, u_{ij}] && \forall e_i,e_j \in \events_\omega \text{ s.t. } e_i\prec_\omega e_j, \label{eq:time_prec} \\
      &t_l - t_k \in [f_{kl},\, w_{kl}] && \forall (e_k, e_l) \in \events_\omega \!\!\times \! \events_\omega \text{ coupled\;by\;a} \nonumber \\ 
      & && \text{ clock-difference\;of\;} \mathcal{G}_\clocks'\,\text{in}~\eqref{eq: new guard}  \label{eq:time_prec_free} 
    \end{align}
\end{subequations}
where $t_i \in \mathbb{R}_{\geq 0}$ is the timestamp at which event $e_i$ fires.
\end{myproposition}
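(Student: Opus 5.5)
The plan is to express every clock valuation appearing in a guard as a difference of event timestamps, and then read each guard conjunct directly as an interval constraint on such differences. Fix $\omega$ and a run $\rho=(\sigma_1,t^{(1)},\nu_1)\ldots(\sigma_k,t^{(k)},\nu_k)$ of $\ctpo$ under $\omega$, and write $t_i$ for the firing time of $e_i\in\events_\omega$. I would first prove a clock-value lemma by induction on the run using the update rule of Def.~\ref{def:tpo_semantics}. The lemma states that just before $\sigma_j$ fires, each clock $c$ satisfies $(\nu_{j-1}\oplus\Delta t^{(j)})(c)=t^{(j)}-t_{r(c,j)}$. Here $r(c,j)$ is the index of the most recent active event before $\sigma_j$ with $c\in\reset(\cdot)$. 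If no such event exists, $r(c,j)$ refers to a virtual start time $t_0=0$, since all clocks start at zero. The induction step is immediate: elapsing time adds $\Delta t^{(j)}$ to every clock, and resetting sets the clock to $0=t^{(j)}-t^{(j)}$.

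Next I would substitute this into each guard of $\guardmap(\sigma_j)$ and treat the two kinds of conjuncts separately. A single-clock bound $c\bowtie t$ from $\mathcal{G}_\clocks$ becomes $t_j-t_{r(c,j)}\bowtie t$. If the resetting event is the virtual start, this is an absolute window, giving \eqref{eq:time_abs}: $a_j$ and $b_j$ are the tightest lower and upper bounds, with defaults $0$ and $\infty$. Otherwise it bounds $t_j-t_i$ for a resetting event $e_i$. In the TPO setting $e_i\prec_\omega e_j$, which gives \eqref{eq:time_prec}, and intersecting all such bounds yields $[\ell_{ij},u_{ij}]$. A clock-difference conjunct $c_p-c_q$ becomes $(t_j-t_{r(c_p,j)})-(t_j-t_{r(c_q,j)})=t_{r(c_q,j)}-t_{r(c_p,j)}$, so $t_j$ cancels and what remains is a constraint between the two resetting events $e_k,e_l$. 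Those events need not be ordered by $\prec$, and this is exactly \eqref{eq:time_prec_free}. For a sum of differences I would argue linearity, so that it reduces to a signed sum of such terms. A symmetric form such as $|c_2-c_3|\le 20$ is a conjunction of two one-sided bounds and therefore yields an interval $[f_{kl},w_{kl}]$. If the interval convention forces nonnegativity, I would orient the pair $(k,l)$ appropriately. Since each step is an equivalence, the guards hold iff these inequalities hold, which gives both directions.

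The main obstacle is that $r(c,j)$ depends on the ordering of the run, so the constants are well defined only if the most recent reset of each clock at each guarded event is fixed by $\guardmap$, $\reset$ and $\prec_\omega$ rather than by the chosen linearization. I would deal with this by adopting the standard TPO well-formedness assumption of~\cite{watanabe2023timed,watanabe2024optimalplanningtimedpartial}. Under that assumption, every clock read in $\guardmap(e_j)$ is reset by a unique event that is $\prec_\omega$-below $e_j$, or by no active event at all, so the resetting event is determined. I would also note the weaker point that for a fixed run the substitution is exact in any case. A related subtlety is inactive resetting events under $\omega$: their conjuncts are dropped or default to the start time, and I would state this convention explicitly. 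Strict inequalities would be handled by closure or by open endpoints, which affects only notation.
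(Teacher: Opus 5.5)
Your route is the paper's own. Its entire argument is the remark before the proposition: a clock's value when an event fires is the time since its most recent reset, hence a difference of two timestamps, and this is substituted into every guard conjunct. Your induction lemma and the cancellation of $t_j$ in $c_p-c_q$ make this precise. Your worry that $r(c,j)$ may depend on the linearization is also well founded. Nothing in Def.~\ref{def:tpo} excludes a clock $c$ reset at two $\prec$-unordered events $e_a,e_b$ and read at $e_j$. In that case $c=t_j-\max(t_a,t_b)$, and $c\le t$ becomes the disjunction $t_j-t_a\le t \vee t_j-t_b\le t$, which is not a window. So your well-formedness condition must be stated as an added hypothesis; the paper's sketch silently assumes it. Once it is stated, Def.~\ref{def:ctpo_semantics} leaves no choice of convention for an inactive resetting event. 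It never fires, the clock reads from time $0$, and the conjunct joins \eqref{eq:time_abs}; dropping it instead would break the equivalence. Similarly, strict guards need open endpoints, since passing to closures is not an equivalence.

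The real gaps are two steps that do not deliver the claimed form.

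First, ``linearity'' does not reduce a sum of differences to pairwise windows. A conjunct $\sum_m(c_{p_m}-c_{q_m})\bowtie t$ becomes $\sum_m(t_{r(q_m,j)}-t_{r(p_m,j)})\bowtie t$, and a bound on a sum is not a conjunction of bounds on its summands. For instance, take $(c_1-c_2)+(c_3-c_4)\le 5$ at $e_5$, with $c_i$ reset at $e_i$. It yields $(t_2-t_1)+(t_4-t_3)\le 5$, whose normal $(-1,1,-1,1)$ is neither a multiple of a unit vector nor of a difference of two. So it cannot in general be expressed by windows of the form \eqref{eq:time_abs}--\eqref{eq:time_prec_free}.

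Second, reorienting the pair cannot enforce $f_{kl}\ge 0$. Requirement R1, $|c_2-c_3|\le 20$ at $e_4$, gives $t_3-t_2\in[-20,20]$. A nonnegative lower bound in either orientation imposes an order between $e_2$ and $e_3$ that the guard does not impose.

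The proposition is therefore provable only after amending it in two ways. Either restrict clock-difference conjuncts to a single difference (or a telescoping sum), or weaken \eqref{eq:time_prec_free} to general linear inequalities over timestamps, which the MILP handles equally well. In addition, allow $f_{kl},w_{kl}\in\mathbb{R}\cup\{\pm\infty\}$. The paper's sketch, which only asserts that each conjunct becomes ``a linear inequality over visit timestamps,'' has the same holes. Still, your write-up claims the pairwise form at precisely these points without a valid argument.
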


Condition~\eqref{eq:time_abs} is an \emph{absolute time window} per event, derived from guards whose clocks reset at mission start; \eqref{eq:time_prec} is a \emph{relative time window} between precedence-ordered pairs, derived from guards whose clocks reset at the earlier event; and \eqref{eq:time_prec_free} is a \emph{precedence-free relative time window} between clock-coupled unordered pairs, the novel case admitted by $\mathcal{G}_\clocks'$ in~\eqref{eq: new guard}. These three families correspond one-to-one to the scheduling constraints of a GTSP-TWPR.
With this reduction, the objective becomes finding a tour of the GTSP-TWPR graph that visits one vertex per required event and satisfies the ordering given by $\prec$ as well as the conditions in~\eqref{eq:time_abs}-\eqref{eq:time_prec_free}. 
Below, we first formulate this GTSP-TWPR by extending the one in~\cite{watanabe2024optimalplanningtimedpartial},
and then show its construction from the robot DTS and cTPO $\ctpo$. 

\begin{definition}[GTSP-TWPR]
\label{def:gtsp}
    Let $\gtsptwpr = (V, E)$ be a weighted directed graph with depot $v_0 \in V$, and denote 
    by $d_i \in \mathbb{R}_{\geq 0}$ the dwell cost of (time spent at) $v_i \in V$ and
    by $d_{ij} \in \mathbb{R}_{\geq 0}$ the travel cost (time) of edge $(v_i, v_j) \in E$. 
    The nodes in $V \setminus \{v_0\}$ are partitioned into disjoint sets (groups) $V_1, \ldots, V_m$, each designated \emph{active} or \emph{inactive}.
    GTSP-TWPR imposes a partial order over the groups, a time window $[a_i,b_i]$ constraint over each $V_i$, and relative time window $[\ell_{ij}, u_{ij}]$ over coupled pairs $(V_i,V_j)$.
    Let $t_i$
    denote the visit time of a node in the active set $V_i$. 
    Then,
    a satisfying \emph{tour} starts and ends at $v_0$, visits exactly one node from each active group, and minimizes the makespan subject to:
    \begin{itemize}
      \item \emph{time windows:} $t_i \in [a_i, b_i]$ for each active group $V_i$;
      \item \emph{precedence constraints}: $t_i \leq t_j$ for each ordered pair of active groups $(V_i, V_j)$; and
      \item \emph{relative time windows}: $t_j - t_i \in [\ell_{ij}, u_{ij}]$ for each coupled pair of active groups $(V_i, V_j)$.
    \end{itemize}
\end{definition}

An example of $\gtsptwpr$ for the robot in Example~\ref{ex:hospital} is shown in Fig.~\ref{fig: gtsp eaxmple}.
We now show how to construct a GTSP-TWPR graph $\gtsptwpr$ for a given $\ctpo$ using DTS $\dts$. 
Nodes in $V$ and edges in $E$ correspond to states and transitions of $\dts$ that are relevant to $\ctpo$, and
each node groups $V_i$ is associated with an event in $\events$ or proposition in $AP$, as detailed below.


\begin{figure}[t]
      \centering
      \resizebox{0.95\columnwidth}{!}{%
      \begin{tikzpicture}[                                     
          every node/.style={draw, circle, minimum size=6mm, inner sep=0.5pt, font=\small},
          >=stealth,
          inc/.style={<->, thin, gray},                        
          prop/.style={<->, thin, gray, dashed},
          x=1cm, y=1cm
      ]
                
      \node (v0) at (0,0)       [fill=gray!15] {$v_0$};
      \node (v1) at (2.5,0)     {$v_1$};
      \node (v2) at (5,1.05)  {$v_2$};
      \node (v3) at (5,-1.05) {$v_3$};                                                   
      \node (v4) at (7.5,1.05)  {$v_4$};
      \node (v5) at (7.5,-1.05) {$v_5$};

      \node (v6) at (5,-2.75) [draw=RedOrange, text=RedOrange] {$v_6$};
      \node (v7) at (7.5,-2.75) [draw=RedOrange, text=RedOrange] {$v_7$};

      \node (vp) at (2.5,-2.75) [dashed, fill=gray!10] {$v_{p_c}$};

      \draw[->, thin, gray] (v0) -- (v1);
      \draw[->, thin, gray] (v4) .. controls (5.8,2) and (2.8,1.7) .. (2.8,1.7)
                             .. controls (1.0,1.7) and (0,.7) .. (v0);
      \draw[->, thin, gray] (v5) .. controls (8.5,0) and (6.3,1.3) .. (5,1.5)
                             .. controls (1.0,1.7) and (0.5,.7) .. (v0);

      \draw[->, thin, gray] (v1) -- (v2);
      \draw[->, thin, gray] (v1) -- (v3);
      \draw[->, thin, gray] (v2) -- (v4);
      \draw[->, thin, gray] (v3) -- (v4);                                                       
      \draw[->, thin, gray] (v2) -- (v5);
      \draw[->, thin, gray] (v5) -- (v6);
      \draw[->, thin, gray] (v3) -- (v5);
      \draw[->, thin, gray] (v1) to[out=-70, in=170] (v6);
      \draw[->, thick, RedOrange] (v6) -- (v7);
      \draw[->, thin, gray] (v7) -- (v5);
      \draw[->, thin, gray] (v7)  .. controls (7.5,-3.5) and (2.8,-3.3) .. (2.8,-3.3)
                             .. controls (1.0,-3.3) and (0,-3.1) .. (v0);

      \draw[inc] (v2) -- (v3);
      \draw[inc] (v4) -- (v5);
      \draw[->, thin, gray] (v3) -- (v6);                                 
      \draw[->, thin, gray] (v2) to[out=-110, in=150, looseness=1.2] (v6);
      \draw[->, thin, gray] (v4) to[out=-50, in=20, looseness=1.7] (v6);
      \draw[->, thin, gray] (v7) -- (v2);
      \draw[->, thin, gray] (v7) -- (v3);
      \draw[->, thin, gray] (v7) to[out=10, in=-30, looseness=1.2] (v4);

      \draw[prop] (v1) -- (vp);
      \draw[prop] (vp) -- (v6);    
      \draw[prop] (vp) to[out=75,  in=-135] (v2);
      \draw[prop] (vp) to[out=35,  in=-160] (v3);
      \draw[prop] (vp) to[out=25, in=-145, looseness=1.1] (v5);
      \draw[prop] (vp) to[out=60,  in=-150, looseness=1.3] (v4);

      \draw[RedOrange, densely dashed, rounded corners]                                    
          ($(v6.north west)+(-0.28,0.28)$) rectangle ($(v7.south east)+(0.28,-0.28)$);
      \node[label_node, text=RedOrange, anchor=west, font=\scriptsize]
          at ($(v7.east)+(0.42,0)$) {sub-TPO $\stpo$};         
      \end{tikzpicture}}
      \vspace{-2mm}
      \caption{
      \small
      GTSP-TWPR $\gtsptwpr$ for the Hospital Robot in Example~\ref{ex:hospital} with cTPO in Fig.~\ref{fig:hospital_tpo}. 
      $v_0$ is the depot, 
      $v_1,\ldots,v_7$ are event nodes, and
      $v_{p_c}$ is the (cardiac-wing) 
      proposition node. 
      }
      \label{fig: gtsp eaxmple}
\end{figure}

\textbf{Nodes:}
We associate each $v \in V$ with $x \in X$ whose label is non-empty, and the depot $v_0 = x_0$, i.e.,
\begin{equation}
  V = \{x \in X \mid x = x_0 \text{ or } \Levent(x) \neq \emptyset \text{ or } \Lprop(x) \neq \emptyset\}.
  \label{eq:V}
\end{equation}
Event group $V_{l_i} = \{x \in X \mid \Levent (x) = e_i\}$ is a set of states with label $e_i$. Proposition group $V_{\ell_k} = \{x \in X \mid p_k \in \Lprop(x),\, \Levent(x) = \emptyset\}$ is the set of states whose proposition label includes $p_k$ and event label is empty. 
Note that $V_{l_1},\ldots,V_{l_{|\events|}},\cup_{k=1}^{|AP|}V_{\ell_k}$ constitute a partition of $V \setminus \{v_0\}$.

\textbf{Edges:}
Edges $E$ encode which transitions between nodes are permitted, and are determined jointly by the precedence structure of $\ctpo$ and the reachability in $\dts$. Let $\Pi_{\prec}^I,\Pi_{\prec}^F \subseteq \Pi$ denote the sets of events with \textit{no predecessors} and \textit{no successors} in $\prec$, respectively. Then, $E$ is defined as follows.

\vspace{-1mm}
\begin{table}[ht]
    \centering
    \begin{tabular}{l lcll}
        \hline
        & \textbf{Source} & \textbf{Edge} & \textbf{Target} & \textbf{Condition} \\
        \hline
        $1$\quad & $v_0$ & $\to$ & $v_j \in V_{l_i}$ & $e_i \in \Pi_{\prec}^I$ \\
        $2$ & $v_i \in V_{l_i}$ & $\to$ & $v_0$ & $e_i \in \Pi_{\prec}^F$ \\
        $3$\; & $v_i \in V_{l_i}$ & $\to$ & $v_j \in V_{l_j}$ & $e_i \prec e_j$ \\
        $4$\; & $v_i \in V_{l_i}$ & $\leftrightarrow$ & $v_j \in V_{l_j}$ & $e_i \not\prec e_j,\ e_j \not\prec e_i$ \\
        $5$\; & $v_i \in V_{\ell_k}$ & $\leftrightarrow$ & $v_j \in V_{l_j}$ & $\forall\, k, j$ \\
        $6$\; & $v_i \in V_{\ell_k}$ & $\leftrightarrow$ & $v_j \in V_{\ell_{k'}}$ & $\forall\, k \neq k'$ \\
        \hline
    \end{tabular}
    \vspace{-2mm}
\end{table}

Rows~1-2 restrict the depot to events in $\Pi_{\prec}^I$ and $\Pi_{\prec}^F$: the tour must begin at an event with no required predecessors and end at one with no required successors. Row~3 adds a directed edge for each precedence-ordered pair, carrying a relative time window from~\eqref{eq:time_prec} when applicable. Row~4 adds bidirectional edges between events unordered under $\prec$, carrying a time window from~\eqref{eq:time_prec_free} when applicable. Rows~5-6 connect proposition nodes to all event and proposition groups, enabling proposition detection along any path through the graph.  Note that, with this construction, directed edges enforce precedence order $\prec$ of $\ctpo$.

\textbf{Costs: }
Travel costs $d_{ij}$ between any two nodes $v_i, v_j \in V$ are the shortest-path durations computed via graph search (e.g., A* and Floyd-Warshall algorithm) over all states in $X$ of $\dts$. The dwell cost is $d_i = \Delta(x_i, a)$ if $v_i$ has a self-loop action $a \in A$, else $d_i = 0$. 


This construction of $\gtsptwpr$ encodes precedence order of $\ctpo$, but
the timing windows from Proposition~\ref{prop:timestamps} and proposition tracking must still be enforced. We do this through an MILP.

\subsection{MILP Formulation}\label{sec:milp}

With the GTSP-TWPR instance $\gtsptwpr$ in hand, we solve it exactly via an MILP, which simultaneously selects a minimum-makespan tour, infers which propositions hold along it, and enforces timing and coverage constraints only for the resulting active events. 
The complete formulation is given in Fig.~\ref{fig:milp}. 
%
%
%
Constraints~\eqref{eq:vars}--\eqref{eq:tpo_prec} and~\eqref{eq:travel}--\eqref{eq:makespan}, together with the routing variables $y_{ij}$ and visit-time variables $t_i$, extend the MILP of~\cite{watanabe2024optimalplanningtimedpartial}; the novel contributions of this work are the precedence-free timing constraints~\eqref{eq:tpo_pf}, the proposition and activation variables $z_k, \chi_e$, and the proposition-tracking and conditional-activation constraints~\eqref{eq:z_vars}--\eqref{eq:z_pf}, described below.

\textbf{Variables.} Let $n = |\events|$, $N = |V|$, and $r = |AP|$. Further, for any positive integer $k$, let $[k] = \{1,\ldots,k\}$. Then, $[N]$ indexes all non-depot nodes (including proposition nodes), with $\mathcal{P} \subseteq [N]$ the subset of proposition node indices.
The routing variables are $y_{ij} \in \{0,1\}$, indicating whether edge $(v_i,v_j) \in E$ is traversed, and 
$t_i \geq 0$, the visit time at node $v_i$, making $t_{N+1} \geq 0$ the makespan.

\textbf{Proposition and activation variables.} For each proposition $p_k \in AP$, the binary variable $z_k \in \{0,1\}$ is inferred from the tour: $z_k = 1$ iff the tour passes through some node in $V_{\ell_k}$. Let $Z = \{z_1, \ldots, z_r\}$, and $\Pi_c \subseteq \events$ denote the \emph{conditional events}. For each $e \in \Pi_c$, we introduce a single binary variable $\chi_e \in \{0,1\}$, obtained by linearizing $\alpha(e)$ over $Z$ such that $\chi_e = 1$ iff $\alpha(e) \equiv \top$ (see Table~\ref{tab:linearization}).

In Example~\ref{ex:hospital}, if the tour passes through the cardiac wing ($V_{\ell_{p_c}}$), then $z_{p_c} = 1$, which sets $\chi_{e_6} = \chi_{e_7} = 1$ and activates the ECG pickup $e_6$ and delivery $e_7$.

\textbf{Constraints.} Flow conservation~\eqref{eq:flow} enforces tour continuity. Each active event group must be visited exactly once~\eqref{eq:in}-\eqref{eq:out}, while proposition nodes carry no coverage constraint and may be traversed freely. The timing windows from Proposition~\ref{prop:timestamps} are enforced as linear inequalities on visit times: \eqref{eq:tpo_prec} encodes bounds from~\eqref{eq:time_prec} and \eqref{eq:tpo_pf} those from~\eqref{eq:time_prec_free}. Constraint~\eqref{eq:travel} ensures that if edge $(v_i, v_j)$ is traversed, the visit time at $v_j$ accounts for travel and dwell at $v_i$. The makespan is bounded by~\eqref{eq:makespan}, which may be omitted if the robot need not return to the depot, relaxing~\eqref{eq:flow} at $v_0$.

\begin{figure}[t]
    \centering
    \footnotesize
    \begin{subequations}
    \begin{align}
        & \min\ t_{N+1}
            && \nonumber \\
        & y_{ij} \in \{0,1\}  
            && \forall (i,j) \in E \label{eq:vars} \\
        & t_i \geq 0 
            && \forall i \in \{0\} \cup [N+1] \label{eq:tvars} \\
        & \sum_{i} y_{ij} - \sum_{k} y_{jk} = 0 
            &&  \forall j \in \{0\} \cup [N] \label{eq:flow} \\
        & \sum_{v_{l_j} \in V_l} \sum_{i} y_{ij} = 1 
            && \forall l \in [n],e_l \notin \Pi_c, j \not\in \mathcal{P}
            \label{eq:in} \\
        & \sum_{v_{l_j} \in V_l} \sum_{k} y_{jk} = 1 
            && \forall l \in [n],e_l \notin \Pi_c, j \not\in \mathcal{P} 
            \label{eq:out} \\
        & t_j - t_i \in [\ell_{ij},\, u_{ij}]
            && \forall (e_i,e_j)\in\eqref{eq:time_prec}, e_i, e_j \notin \Pi_c
            \label{eq:tpo_prec} \\
        & t_l - t_k \in [f_{kl},\, w_{kl}]
            && \forall (e_k,e_l)\in\eqref{eq:time_prec_free}, e_k, e_l \notin \Pi_c
            \label{eq:tpo_pf} \\
        & y_{ij}{=}1 \Rightarrow t_j - t_i \geq d_{ij} + d_j 
            && \forall i \in \{0\} \cup[N], \forall j\in[N] \setminus \mathcal{P}
            \label{eq:travel} \\
        & y_{ij}=1 \Rightarrow t_{N+1} - t_i \geq d_{ij} 
            && \forall i\in[N], j = 0
            \label{eq:makespan} \\
        & z_k,\ \chi_{e_l} \in \{0,1\} 
            && \forall k \in [r],\ \forall e_l \in \Pi_c \label{eq:z_vars} \\
        & z_k \geq y_{ij} 
            && \forall k \in [r],\ (i,j)\in E_k
            \label{eq:z_lb} \\
        & z_k \leq \textstyle\sum_{(i,j) \in E_k} y_{ij} 
            && 
            \forall k \in [r]
            \label{eq:z_ub} \\
        & \sum_{v_{l_j} \in V_l} \sum_{i} y_{ij} = \chi_{e_l}
            && \forall e_l \in \Pi_c
            \label{eq:z_cov} \\
        & \chi_{e_i}=1 \Rightarrow t_j - t_i \in [\ell_{ij},\, u_{ij}] \!
            && \forall (e_i,e_j)\in\eqref{eq:time_prec}, e_i \in \Pi_c
            \label{eq:z_tpo} \\
        & \chi_{e_k}=1 \Rightarrow t_l - t_k \in [f_{kl},\, w_{kl}] \!
            && \forall (e_k,e_l)\in\eqref{eq:time_prec_free}, e_k \in \Pi_c \label{eq:z_pf}
    \end{align}
\end{subequations}
    \caption{
    \small
    MILP formulation, where $e_l$ is an event of group $V_l$, with activation indicator $\chi_{e_l}$ when $e_l \in \Pi_c$, $V_{\ell_k}$ is the proposition group for $p_k$, and $E_k = \{(i,j) \in E \mid p_k \in \Lprop(x_i) \text{ or } p_k \in \Lprop(x_j)\}$.}
    \label{fig:milp}
\end{figure}

The novel constraints and variables couple routing and activation: 
the tour determines which propositions hold; this in turn determines the conditional events that are active and what timing they must satisfy, all resolved within a single solve. 
Specifically, \emph{proposition tracking}~\eqref{eq:z_lb}-\eqref{eq:z_ub} infers $z_k$ from the tour; \emph{conditional coverage}~\eqref{eq:z_cov} ties event group visitation to $\chi_{e_l}$; and \emph{conditional timing}~\eqref{eq:z_tpo}-\eqref{eq:z_pf} enforces timing windows only for active events. Table~\ref{tab:linearization} details how $\chi_e$ is computed from $Z$ following standard MILP linearization of Boolean formulas~\cite{Cardona2023} for each Boolean connective.

\begin{table}[t]
\caption{
\small
Linearization of formula $\alpha(e)$ into binary constraints relating the activation indicator $\chi_e$ to prop. variables $z_{k_i}, z_{k_j} \in Z$.}
\centering\small
\begin{tabular}{ll}
\hline
\textbf{Formula} $\alpha(e)$ & \textbf{Linearization of} $\chi_e$ \\
\hline
$p_{k_i}$ & $\chi_e = z_{k_i}$ \\
$\neg p_{k_i}$ & $ \chi_e = 1 - z_{k_i}$ \\
$p_{k_i} \wedge p_{k_j}$ & $\chi_e \leq z_{k_i},\ \chi_e \leq z_{k_j},\ \chi_e \geq z_{k_i} + z_{k_j} - 1$ \\
$p_{k_i} \vee p_{k_j}$ & $\chi_e \geq z_{k_i},\ \chi_e \geq z_{k_j},\ \chi_e \leq z_{k_i} + z_{k_j}$ \\
\hline
\end{tabular}
\label{tab:linearization}
\end{table}

Nested formulas are handled recursively by applying these rules to each subformula.

\subsubsection{Complexity}\label{sec:milp_complexity}
GTSP-TWPR is NP-hard~\cite{Garey+Johnson/1979/Computers}; solving it via MILP has exponential worst-case time complexity in the number of binary variables and, per branch-and-bound node, polynomial in the number of constraints. Our MILP has $\mathcal{O}(N^2 + n \cdot r)$ binary variables
and $\mathcal{O}(r \cdot N^2)$ constraints.

Timing constraints have a dual effect: each additional constraint tightens the LP relaxation and strengthens the branch-and-bound bound, but also increases solve time per node. Consequently, sparser timing constraints can increase solve time in practice, as confirmed in Sec.~\ref{sec:experiments}. As $N$ and $r$ grow with $\ctpo$ complexity, solving a single MILP over a larger event set can become intractable.  We address this issue next.

\subsection{Sub-TPO Decomposition}\label{sec:decomp}

Rather than solving a single large MILP over all events (the \emph{monolithic} formulation), we observe that many task structures contain subsets of events that are internally coupled, through both precedence and timing constraints, but interact with the rest of the plan only through a single entry and exit event.
Such substructures can be identified, solved independently as smaller MILP instances, and composed back into the global plan, which we call the \emph{decomposition} approach.
Since each sub-problem is exponential only in its own size, and the sub-problem sizes are small relative to the full problem, the overall computational cost becomes a sum of small exponential problems rather than one large one.
We formalize these substructures as sub-TPOs.

\begin{definition}[Sub-TPO]\label{def:stpo}
    Let $\ctpo$ be a cTPO with event set $\events$ and $\gtsptwpr = (V, E)$ the corresponding GTSP-TWPR graph. Let $\events_s \subseteq \events$ and let $\stpo$ be a cTPO with event set $\events_s$, whose precedence, timing, and activation components are defined over events in $\events_s$. We say $\stpo$ is a \emph{sub-TPO} of $\ctpo$ if: 
    \begin{enumerate}
        \item  it has a unique \emph{entry} $e_{\mathrm{in}}$ and \emph{exit} $e_{\mathrm{out}}$ in $\events_s$, i.e., $e_{\mathrm{in}}$ is the unique event in $\events_s$ that is preceded by some event in $\events \setminus \events_s$, and $e_{\mathrm{out}}$ is the unique event in $\events_s$ that precedes some event in $\events \setminus \events_s$, and
        
        \item each internal event $e_i \in \events_s \setminus \{e_{\mathrm{in}}, e_{\mathrm{out}}\}$ satisfies:
        \begin{enumerate}
            \item \emph{GTSP-TWPR isolation:} every node in $V_{l_i}$ is connected in $E$ only to nodes belonging to events in $\events_s$, with no edges to proposition nodes; further, $V_{l_{\mathrm{out}}}$ has no incoming edge
            from a node of an event in $\events \setminus \events_s$.
            \label{cond: G isolocation}
            \item \emph{Sub-TPO isolation:} $e_i$ is unordered under $\prec$ with respect to every $e_j \notin \events_s$.
            \label{cond: sub-top isolation}
            \item \emph{Clock isolation:} the clocks appearing in $\guardmap(e_i)$ and $\reset(e_i)$ are not used by $\guardmap(e_j)$ or $\reset(e_j)$ for any $e_j \notin \events_s$.
            \label{cond: clock isolation}
            \item \emph{Activation consistency:} $\alpha(e_i) \! \equiv \! \alpha(e_{\mathrm{in}}) \! \equiv \! \alpha(e_{\mathrm{out}})$.
            \label{cond: activation consistency}
        \end{enumerate}
    \end{enumerate}
    \vspace{-3mm}
\end{definition}


The cTPO in Fig.~\ref{fig:hospital_tpo} with $\gtsptwpr$ in Fig.~\ref{fig: gtsp eaxmple} contains one sub-TPO, which happens to be exactly the part shown in orange.

Among all valid sub-TPOs rooted at a given entry event, we seek a largest one to maximize the computational benefit of decomposition. The existence and structure of sub-TPOs is determined by the topology of $\gtsptwpr$, which depends on $\dts$.
\begin{definition}[Locally Maximal Sub-TPO]\label{def:max_stpo}
A sub-TPO $\stpo$ of $\ctpo$ with entry $e_{\mathrm{in}}$ is \emph{locally maximal} if no other sub-TPO of $\ctpo$ with the same entry $e_{\mathrm{in}}$ has a strictly larger event set.
\end{definition}
Locally maximal sub-TPOs are not only well-defined but also pairwise disjoint.

\begin{myproposition}[Disjointness of Locally Maximal Sub-TPOs]
    \label{prop:unique}
    For any two distinct locally maximal sub-TPOs $\stpo$ and $\stpo'$, $\events_s \cap \events_{s'} = \emptyset$.
\end{myproposition}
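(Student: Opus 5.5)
Assume, for contradiction, that $\stpo$ and $\stpo'$ are distinct locally maximal sub-TPOs with entries $e_{\mathrm{in}}, e'_{\mathrm{in}}$, exits $e_{\mathrm{out}}, e'_{\mathrm{out}}$, and a common event $e \in \events_s \cap \events_{s'}$. If $e_{\mathrm{in}} = e'_{\mathrm{in}}$, Def.~\ref{def:max_stpo} gives $|\events_s| = |\events_{s'}|$. I would then show that $\events_s \cup \events_{s'}$ is itself a sub-TPO with the same entry. Since $\stpo \neq \stpo'$, this union is strictly larger than at least one of them, contradicting local maximality. So the proof reduces to two steps: (i) the entries must coincide, and (ii) the union of two overlapping sub-TPOs with a common entry is a sub-TPO.

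For (i), the plan is to use the isolation conditions. Suppose $e_{\mathrm{in}} \neq e'_{\mathrm{in}}$. I would first place $e$ relative to the boundary events. If $e$ is internal to $\stpo$, Condition~2(b) makes it unordered with every event outside $\events_s$. Condition~2(a) confines all GTSP-TWPR edges at $V_{l_e}$ to nodes of $\events_s$, and the same holds for $\events_{s'}$. Next I would track precedence chains. Walking backward from $e$ along the cover relation of $\prec$ within $\events_{s'}$ reaches $e'_{\mathrm{in}}$, unless $e'_{\mathrm{in}}$ is itself preceded only from outside. Every internal event of $\stpo$ on such a chain has all its predecessors in $\events_s$, so the chain can only leave $\events_s$ through $e_{\mathrm{in}}$. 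This forces $e'_{\mathrm{in}} \preceq e_{\mathrm{in}}$, and symmetrically $e_{\mathrm{in}} \preceq e'_{\mathrm{in}}$. Hence $e_{\mathrm{in}} = e'_{\mathrm{in}}$. The degenerate cases where $e$ is an entry or exit of one sub-TPO need separate handling. There I would use uniqueness of entry/exit in Condition~1: an event of $\events_{s'}$ preceded from outside $\events_{s'}$ must be $e'_{\mathrm{in}}$.

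For (ii), with common entry $e_{\mathrm{in}}$, set $U = \events_s \cup \events_{s'}$. The checks would go as follows.
\begin{enumerate}
\item The entry of $U$ is $e_{\mathrm{in}}$, because any event of $U$ preceded from outside $U$ is preceded from outside $\events_s$ or outside $\events_{s'}$.
\item The exit of $U$ is unique. I would argue that $e_{\mathrm{out}}$ and $e'_{\mathrm{out}}$ are comparable through the shared chain from $e_{\mathrm{in}}$, and that the smaller one is internal to the union.
\item Conditions~2(a) and 2(c) are preserved under union, since "connected only within" and "clocks not shared outside" only weaken as the set grows.
\item Condition~2(b) is preserved for the same reason.
\item Condition~2(d) holds because every activation formula equals $\alpha(e_{\mathrm{in}})$ in both sub-TPOs.
\end{enumerate}

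The main obstacle is the exit uniqueness in (ii), together with the boundary cases of (i). An exit of $\stpo$ may precede an event of $\events_{s'} \setminus \events_s$, so it need not be an exit of $U$. Meanwhile, the former exit must now satisfy the internal conditions 2(a)--(c) even though it previously did not. I would first try to show that whichever of $e_{\mathrm{out}}, e'_{\mathrm{out}}$ is $\prec$-later becomes the exit of $U$. For the other, its outward edges and clocks would be shown to lie inside $U$ by combining Condition~2(a) of the other sub-TPO with the clause that $V_{l_{\mathrm{out}}}$ has no incoming edges from outside. If this fails in general, I would fall back on proving disjointness directly: show that the overlap forces one event set to contain the other, which contradicts local maximality unless the two sub-TPOs are equal.
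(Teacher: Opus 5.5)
The gap is in step (i), and it is not limited to the boundary cases you set aside. Your backward walk from $e$ to $e'_{\mathrm{in}}$ inside $\events_{s'}$ shows only one of two things. Either the chain passes through $e_{\mathrm{in}}$, which gives $e'_{\mathrm{in}}\preceq e_{\mathrm{in}}$. Or the whole chain, $e'_{\mathrm{in}}$ included, stays in $\events_s$ without meeting $e_{\mathrm{in}}$, and then the two entries are not related at all. So the ``symmetric'' inequality fails exactly when one sub-TPO sits inside the other. Def.~\ref{def:max_stpo} does not exclude that situation, because it compares a sub-TPO only with sub-TPOs having the \emph{same} entry.

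Concretely, take cover relations $s\to a\to b\to c\to d\to t$ and $s\to x\to t$, no clocks, and all activations $\top$. Choose a DTS whose GTSP-TWPR graph joins $b,c,d$ only to their chain neighbours. Read the precedence clauses of Def.~\ref{def:stpo} along the cover relation, which the paper's hospital example already needs. Then $\{a,b,c,d\}$ with entry $a$ and $\{b,c,d\}$ with entry $b$ are both sub-TPOs, and each is locally maximal for its own entry. Any enlargement of either does one of three things: it removes $a$ (resp.\ $b$) as the entry; it creates a second entry or exit at $s$, $x$ or $t$; or it makes the sink $t$ internal even though $t$ is joined to the depot. The two sets overlap in $\{b,c,d\}$. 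So the conclusion of step (i) is false, and your fallback (overlap forces containment, contradicting local maximality) fails on the same example.

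The paper's sketch does not close this case either. Its containment branch asserts that local maximality is violated ``at the shared or inner entry,'' but Def.~\ref{def:max_stpo} gives no such violation at the inner entry. What is missing is a stronger maximality notion. One option is maximality under inclusion among \emph{all} sub-TPOs, which is closer to what Algorithm~\ref{alg:detection} keeps by discarding classified events. Under that notion the nested case is immediate and only partial overlap remains.

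For partial overlap, your step (ii) is the same idea as the paper's ``concatenate into a strictly larger valid sub-TPO.'' The difficulty you flag there is real, and neither you nor the paper handles it:
\begin{itemize}
\item A former exit that becomes internal to $\events_s\cup\events_{s'}$ was never required to satisfy Conditions 2(a)--(c), so these conditions are not inherited.
\item Uniqueness of the union's exit needs its own argument.
\item With step (i) gone, you must also identify the union's entry, or derive a contradiction from the isolation conditions when the two entries differ.
\end{itemize}
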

\begin{proof}
Suppose $\events_s \cap \events_{s'} \neq \emptyset$.
If 
$\events_s \subseteq \events_{s'}$ or $\events_{s'} \subseteq \events_{s}$, local maximality is violated at the shared or inner entry.
If they partially overlap, any shared event forces edges or precedence relations across a sub-TPO boundary, violating GTSP-TWPR or sub-TPO isolation, or else the two sub-TPOs concatenate into a strictly larger valid sub-TPO, violating local maximality. Every case yields a contradiction.
\end{proof}

We now present Algorithm~\ref{alg:detection} to compute the set of locally maximal sub-TPOs. We define the set of \emph{articulation events} $\mathcal{A}(\gtsptwpr) = \{e_i \in \events \mid \exists v \in V_{l_i} $\text{ s.t. }$ v \text{ is a strong articulation point of } \gtsptwpr\}$ as events whose removal from $\gtsptwpr$ increases the number of strongly connected components, and the \emph{initial events} $\mathcal{I}(\gtsptwpr) := \{e_i \in \events \mid \exists\, v \in V_{l_i},\, (v_0, v) \in E\}$ as the set of events that are reachable directly from the depot $v_0$. 
In the construction of $\gtsptwpr$, each event node carries a self-loop to enable correct articulation point identification. A valid entry event $e_{\mathrm{in}}$ must be either an initial event or an articulation event: if it were neither, its internal events would remain reachable from the rest of $\gtsptwpr$ after its removal, violating GTSP-TWPR isolation. This restricts the candidate set to $\mathcal{C} = \mathcal{A}(\gtsptwpr) \cup \mathcal{I}(\gtsptwpr) \subseteq \events$. 

Algorithm~\ref{alg:detection} iterates over $\mathcal{C}$ in topological order and, for each unclassified candidate entry $e$, partitions the events reachable from $e$ under $\prec$ into descendant sets $\{B_i\}$ by $e$'s direct successors. It then applies the pruning procedure \textsc{IsSubTPO} (described below) to the full union $B = \cup_i B_i$, which returns the largest valid subset of $B \cup \{e\}$ satisfying Def.~\ref{def:stpo}, if one exists. Once a sub-TPO is found, all its events are marked and excluded from 
any subsequent sub-TPO.


\setlength{\textfloatsep}{6pt} 
\begin{algorithm}[t]
\small
\caption{Maximal Sub-TPO Decomposition}
\label{alg:detection}
\KwIn{$\ctpo$, $\gtsptwpr = (V, E)$}
\KwOut{$\mathcal{M}$: set of maximal sub-TPOs of $\ctpo$}
$\mathcal{M} \leftarrow \emptyset$\;
$\mathrm{classified} \leftarrow \emptyset$\;
$\mathcal{C} \leftarrow$ topological order of $\mathcal{A}(\gtsptwpr) \cup \mathcal{I}(\gtsptwpr)$ induced by $\prec$\;
\For{$e \in \mathcal{C}$}{
    \If{$e \notin \mathrm{classified}$}{
        $\{B_1, \ldots, B_m\} \leftarrow \textsc{Descendants}(e, \events, \prec)$\;
        $B \leftarrow \bigcup_{i=1}^{m} B_i$\;
        \If{$\textsc{IsSubTPO}(B \cup \{e\}, e, \gtsptwpr, \ctpo)$}{
            $\mathcal{M} \leftarrow \mathcal{M} \cup \{\stpo \text{ induced by } B \cup \{e\}\}$\;
            $\mathrm{classified} \leftarrow \mathrm{classified} \cup B \cup \{e\}$\;
        }
    }
}
\Return{$\mathcal{M}$}\;
\end{algorithm}


\textsc{IsSubTPO} takes the candidate set $B \cup {e}$ and iteratively removes any event that violates one of the isolation conditions of Def.~\ref{def:stpo} relative to the current set. This process repeats until the candidate set contains no violations. The procedure returns the resulting set as a valid sub-TPO if it contains at least two events and has exactly one exit (an event with an edge to an event outside the set); otherwise, it rejects the set.

\begin{mytheorem}[Completeness]
\label{thm:correctness}
Algorithm~\ref{alg:detection} is complete,~i.e.,~it returns exactly the set of all locally maximal sub-TPOs of~$\ctpo$.
\end{mytheorem}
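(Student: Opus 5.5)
The proof shows two inclusions: every set returned by Algorithm~\ref{alg:detection} is a locally maximal sub-TPO (soundness), and every locally maximal sub-TPO is returned (completeness in the narrow sense). The tools are the candidate restriction $\mathcal{C} = \mathcal{A}(\gtsptwpr) \cup \mathcal{I}(\gtsptwpr)$ argued before the algorithm, and the disjointness result of Proposition~\ref{prop:unique}.

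\textbf{Soundness.} Suppose the algorithm adds $\stpo$, induced by the output $S$ of \textsc{IsSubTPO} started from $B \cup \{e\}$. By construction the pruning loop stops only when no event of $S$ violates conditions 2(a)--2(d) of Def.~\ref{def:stpo}. The acceptance test requires exactly one exit. Since $S$ is contained in $e$ together with its $\prec$-descendants, and $e$ is the root, $e$ is the unique event of $S$ with a predecessor outside $S$, so condition 1 holds. Hence $S$ is a sub-TPO with entry $e$.

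For local maximality, I would prove an invariant of the pruning. Let $S^\star$ be any sub-TPO with entry $e$. Then $S^\star \subseteq B \cup \{e\}$, because sub-TPO isolation forces every event of $S^\star$ to be reachable from $e$ within $\prec$. Moreover, no event of $S^\star$ is ever removed. The argument is by induction on pruning rounds: if the current set contains $S^\star$, then any violation witnessed by an event of $S^\star$ is a violation relative to a superset of $S^\star$. All the isolation conditions are monotone, in the sense that being isolated from the complement of a larger set is implied by being isolated from the complement of $S^\star$. So such an event is not removed. It follows that the final $S$ contains every sub-TPO with entry $e$, and is therefore the largest one, which gives local maximality.

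A subtlety is the exit-uniqueness test: a superset might have two exits while $S^\star$ has one. I would handle this by arguing that an extra exit event lies outside $S^\star$ and violates condition 2(a) or 2(b), so the pruning would already have removed it.

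\textbf{Completeness.} Let $\stpo^\star$ be locally maximal with entry $e^\star$. The candidate argument preceding the algorithm shows $e^\star \in \mathcal{C}$, so the loop eventually reaches $e^\star$. The main obstacle is showing that $e^\star$ is still unclassified at that point. Suppose instead that $e^\star$ was absorbed by an earlier returned sub-TPO $\stpo'$ with entry $e' \neq e^\star$, coming earlier in topological order. By soundness, $\stpo'$ is locally maximal, and it shares $e^\star$ with $\stpo^\star$. Proposition~\ref{prop:unique} then forces $\stpo' = \stpo^\star$, contradicting the fact that the two have distinct entries.

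Since $e^\star$ is unclassified, the invariant above shows that \textsc{IsSubTPO} returns a set containing $\events_{s^\star}$. By local maximality that set equals $\events_{s^\star}$. The test passes because the set has at least two events, so $\stpo^\star \in \mathcal{M}$.

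I expect the hardest part to be making the monotonicity invariant rigorous for the GTSP-TWPR isolation condition, particularly its clause about the exit group $V_{l_{\mathrm{out}}}$. I would first try to reduce that clause to the precedence structure through the edge rows 3--4 of the graph construction.
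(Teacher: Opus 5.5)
Your proposal is correct at the paper's level of rigor and follows essentially the same route as the paper's proof. The shared steps are that the entry lies in $\mathcal{C}$, that events of a valid sub-TPO are never pruned so \textsc{IsSubTPO} returns the largest valid subset, and that Proposition~\ref{prop:unique} rules out another sub-TPO absorbing the entry. You also make the soundness direction explicit, and you combine it with Proposition~\ref{prop:unique} where the paper appeals to topological order.

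One local fix is needed. Condition~1 for the output does not follow just from $S$ lying among $e$'s descendants, because a descendant may have another direct predecessor outside $S$. It follows instead from the pruning itself: condition~2(b) handles internal events, and the incoming-edge clause of 2(a) handles the exit.
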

\begin{proof}[Proof]
Any locally maximal sub-TPO's entry $e_{\mathrm{in}}$ is either an initial event or, by GTSP-TWPR isolation, an articulation event, so $e_{\mathrm{in}} \in \mathcal{C}$. Topological processing of $\mathcal{C}$ guarantees $e_{\mathrm{in}}$ is considered before any of its descendants can be classified by another sub-TPO. Since every event of a valid sub-TPO satisfies Def.~\ref{def:stpo}, none are pruned by \textsc{IsSubTPO}, which therefore returns the largest valid subset, ensuring local maximality. Proposition~\ref{prop:unique} ensures no two locally maximal sub-TPOs compete for the same events.
\end{proof}

\begin{figure*}[t]
    \centering
    
    \begin{subfigure}[b]{0.32\linewidth}
        \centering
        \resizebox{\linewidth}{!}{
        \begin{tikzpicture}[
            node distance=0.5cm and 2.5cm,
            every node/.style={draw, rounded corners, fill=white!20, minimum width=2em, inner sep=3pt},
            >=stealth
        ]
        \node (e1) {$e_1$: charge};
        \node (e3) [right=2 of e1] {$e_3$: shelf B};
        \node (e2) [above=0.4 of e3] {$e_2$: shelf A};
        \node (e4) [below=0.4 of e3] {$e_4$: shelf C};
        \node (e5) [right=2 of e3] {$e_5$: dock};

        \node (e7) [below=0.2 of e4, draw=RedOrange, text=RedOrange] {$e_7$: clean 2};
        \node (e6)  [left=0.75 of e7, draw=RedOrange, text=RedOrange] {$e_6$: clean 1};
        \node (e8)  [right=0.75cm of e7, draw=RedOrange, text=RedOrange] {$e_8$: clean 3};

        \draw[->, thick] (e1) -- (e2);
        \draw[->, thick] (e1) -- (e3);
        \draw[->, thick] (e1) -- (e4);
        \draw[->, thick] (e2) -- (e5);
        \draw[->, thick] (e3) -- (e5);
        \draw[->, thick] (e4) -- (e5);

        \draw[->, thick, RedOrange] (e1) -- (e6)
            node[midway, below, sloped, label_node, font=\scriptsize]{\textcolor{RedOrange}{$p_h$}};
        \draw[->, thick, RedOrange] (e6) -- (e7)
            node[midway, below, label_node, font=\scriptsize]{\textcolor{RedOrange}{$p_h$}};
        \draw[->, thick, RedOrange] (e7) -- (e8)
            node[midway, below, label_node, font=\scriptsize]{\textcolor{RedOrange}{$p_h$}};
        \draw[->, thick, RedOrange] (e8) -- (e5)
            node[midway, below, sloped, label_node, font=\scriptsize]{\textcolor{RedOrange}{$p_h$}};

        \node[label_node, above=0.005cm of e1, font=\scriptsize] {$c_1 := 0$};
        \node[label_node, above=0.005cm of e2, font=\scriptsize] {$5 \leq c_1 \leq 15$};
        \node[label_node, above=0.005cm of e3, font=\scriptsize, align=center] {$c_1 \geq 5$};
        \node[label_node, above=0.005cm of e4, font=\scriptsize, align=center] {$c_1 \geq 5$};
        \end{tikzpicture}
        }
        \caption{Task cTPO $\ctpo^\text{insp}$. Black edges and guards apply unconditionally. \textcolor{RedOrange}{Orange} denotes the conditional events.}
        \label{fig:warehouse_ctpo}
    \end{subfigure}
    \hfill
    \begin{subfigure}{0.32\linewidth}
        \centering
        \includegraphics[width=\linewidth]{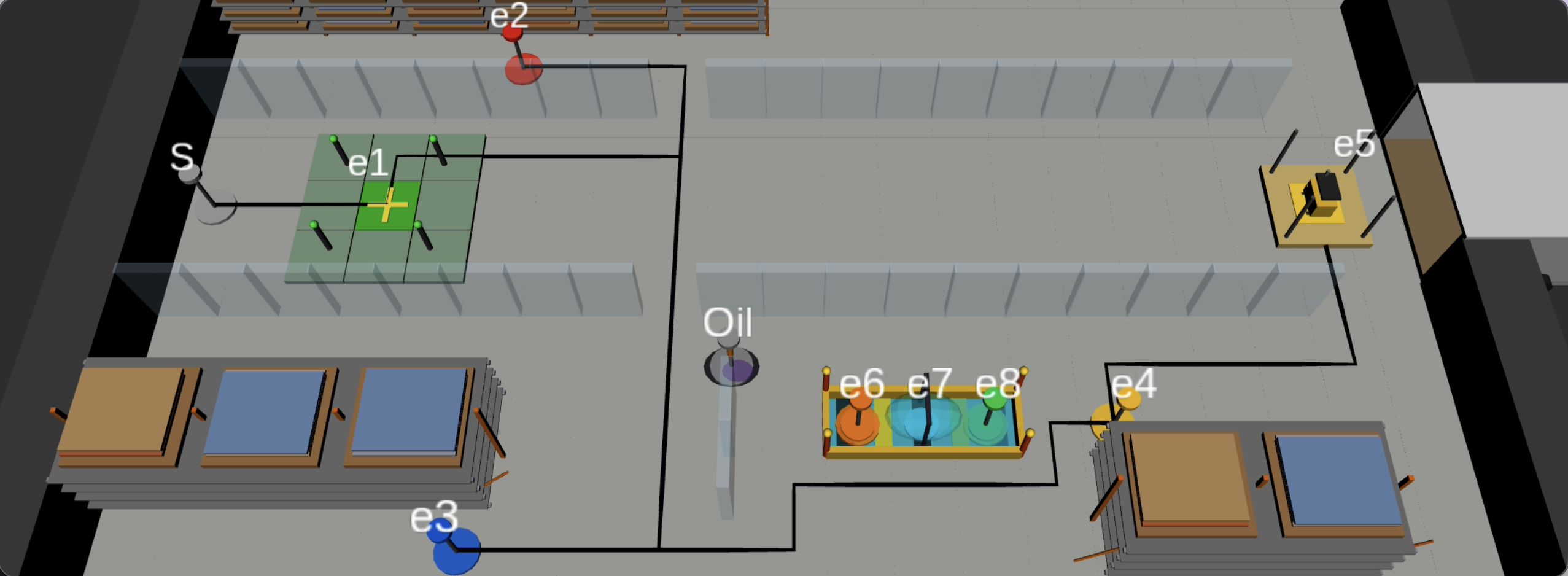}
        \caption{The optimal trajectory avoids the spill zone and completes only the mandatory, unconditional events of $\ctpo^\text{insp}$.}
        \label{fig:warehouse_shortcut}
    \end{subfigure}%
    \hfill
    \begin{subfigure}{0.32\linewidth}
        \centering
        \includegraphics[width=\linewidth]{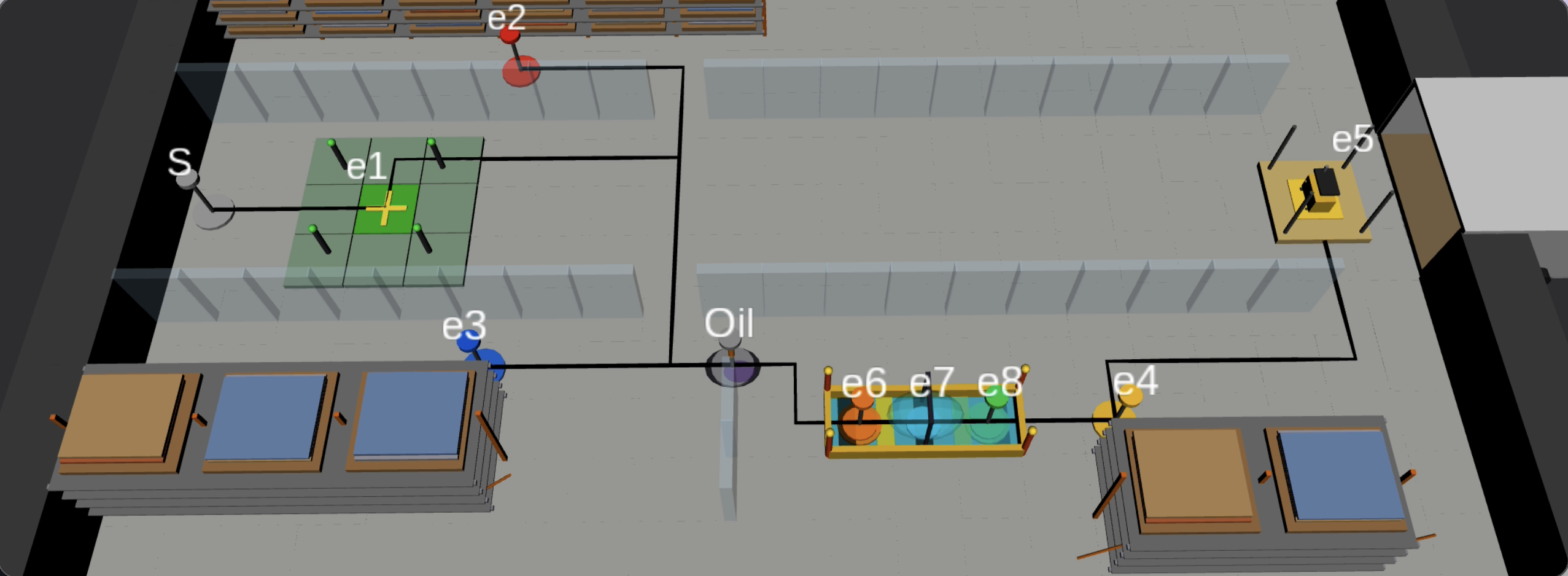}
        \caption{The optimal traj. passes through the spill zone, activating $p_h = \top$ and extending the plan with the cleaning sequence $e_6, e_7, e_8$.}
        \label{fig:warehouse_long}
    \end{subfigure}%
    %

    \begin{subfigure}{0.32\linewidth}
        \centering
        \resizebox{\linewidth}{!}{
        \begin{tikzpicture}[
            node distance=0.5cm and 3.5cm,
            every node/.style={draw, rounded corners, fill=white!20, minimum width=2em, inner sep=3pt},
            >=stealth
        ]
        \node (e1) {$e_1$: arrive};
        \node (e4) [right=1.5 of e1] {$e_4$: sci. B};
        \node (e5) [right=1.8 of e4] {$e_5$: return};
        \node (e2) [above=0.6 of $(e1)!0.3!(e5)$] {$e_2$: heat soil};
        \node (e3) [right=0.35cm of e2] {$e_3$: sci. A};
        \node (e6) [below=0.6 of $(e1)!0.31!(e5)$, draw=RedOrange, text=RedOrange] {$e_6$: nav};
        \node (e7) [right=1.2cm of e6, draw=RedOrange, text=RedOrange] {$e_7$: collect};

        \draw[->, thick] (e1) -- (e2);
        \draw[->, thick] (e2) -- (e3);
        \draw[->, thick] (e3) -- (e5);
        \draw[->, thick] (e1) -- (e4);
        \draw[->, thick] (e4) -- (e5);
        \draw[->, thick, RedOrange] (e1) -- (e6) node[midway, below, sloped, label_node, font=\scriptsize] {\textcolor{RedOrange}{$p_{\text{o}}$}};
        \draw[->, thick, RedOrange] (e6) -- (e7) node[midway, below, sloped, label_node, font=\scriptsize] {\textcolor{RedOrange}{$p_{\text{o}}$}};
        \draw[->, thick, RedOrange] (e7) -- (e5) node[midway, below, sloped, label_node, font=\scriptsize] {\textcolor{RedOrange}{$p_{\text{o}}$}};
        \node[label_node, above=0.01cm of e1, font=\scriptsize] {$c_1 := 0$};
        \node[label_node, above=0.01cm of e3, font=\scriptsize] {$c_3 := 0$};
        \node[label_node, below=0.01cm of e4, font=\scriptsize] {$c_4 := 0$};
        \node[label_node, below=0.01cm of e6, font=\scriptsize] {\textcolor{RedOrange}{$c_6 := 0$}};
        \node[label_node, below=0.01cm of e7, font=\scriptsize] {\textcolor{RedOrange}{$c_6 \leq 30$}};
        \node[
        label_node,
        above=0.01cm of e5,
        font=\scriptsize,
        align=center
    ]{
    $|c_3 - c_4| \leq 20 \wedge$ \\
        $c_1 \leq 50$
    };
        \end{tikzpicture}
        }
        \caption{Task cTPO $\ctpo^\text{rover}$. 
        Black edges and guards apply unconditionally. \textcolor{RedOrange}{Orange} denotes the conditional requirements.
        }
        \label{fig:rover_ctpo}
    \end{subfigure}
    \hfill
    \begin{subfigure}{0.32\linewidth}
        \centering
        \includegraphics[width=\linewidth]{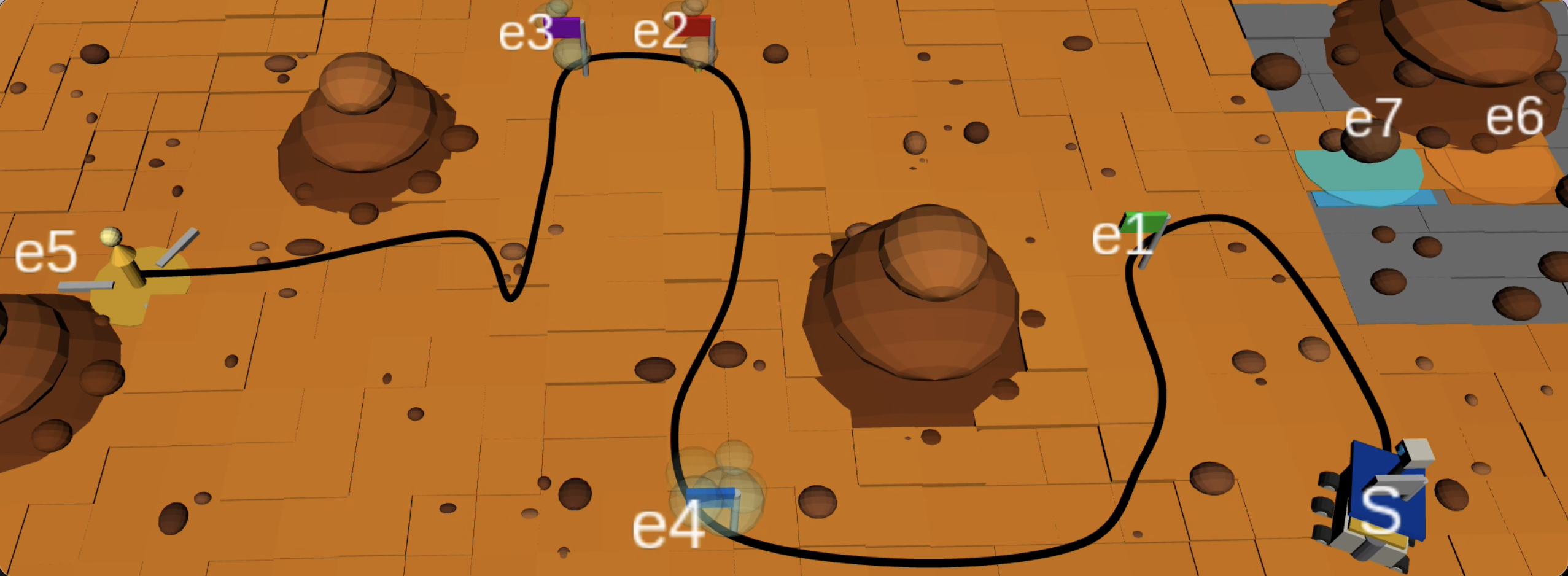}
        \caption{The outcrop is away from the rover. The optimal trajectory avoids it, yielding $p_\text{o} = \bot$ and a plan over five events only.}
        \label{fig:mars_no_outcrop}
    \end{subfigure}%
    \hfill
    \begin{subfigure}{0.32\linewidth}
        \centering
        \includegraphics[width=\linewidth]{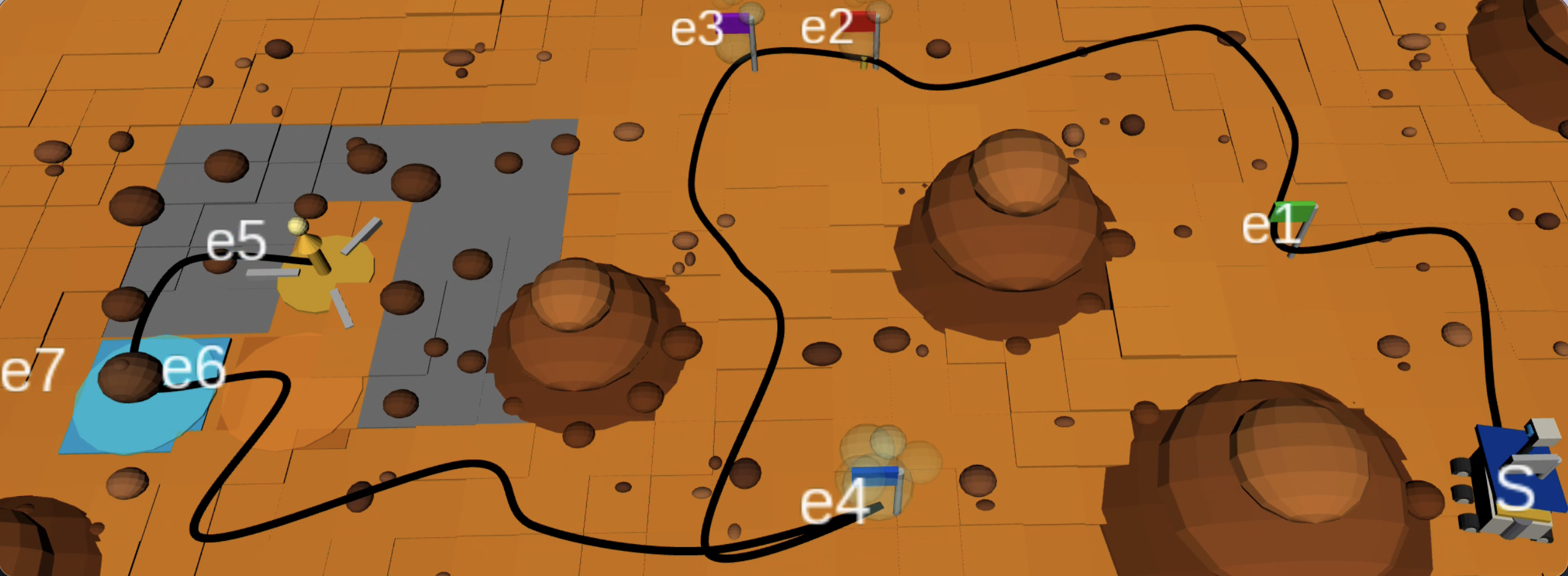}
        \caption{The outcrop surrounds the rover. The optimal trajectory passes through it, activating $p_\text{o} = \top$ and extending the plan with $e_6$, $e_7$.}
        \label{fig:mars_outcrop}
    \end{subfigure}%

    \vspace{-1mm}
    \caption{
    \small
    Illustrative examples.
    Task cTPOs and plans 
    for the Warehouse (a)-(c) and Mars Rover (d)-(f) examples (video: \url{https://youtu.be/9tdEdERKRE4}). 
    }
    \label{fig:warehouse_all}
    \vspace{-3mm}
\end{figure*}

Once all maximal sub-TPOs are identified, we solve each ${\stpo}_i \in \mathcal{M}$ independently, obtaining a local timed trace $\tau_i^*$ with duration $d_i^*$. We then solve a reduced \emph{high-level} TPO $\tpo_H$, which replaces each sub-TPO with its entry event $e_{\mathrm{in}}^i$ and adjusts the travel cost and timing bounds around $e_{\mathrm{in}}^i$ by $d_i^*$ to account for the time spent inside it. The full trace is recovered from the resulting high-level trace $\tau_H^*$ by substituting each local $\tau_i^*$ back in at the time $e_{\mathrm{in}}^i$ is reached.

\begin{mytheorem}[Optimality]
\label{thm:optimality}
The timed trace $\tau_H^*$ returned by the decomposition procedure above is optimal for Problem~\ref{prob:synthesis}.
\end{mytheorem}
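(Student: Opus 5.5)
The plan is an exchange argument: any feasible plan for the monolithic problem can be turned into a feasible plan of the decomposed problem with no larger makespan, and conversely. Together with the optimality of each sub-MILP and of the high-level MILP, this gives optimality of the recomposed trace $\tau_H^*$ for Problem~\ref{prob:synthesis}.

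\textbf{Step 1 (soundness).} I first show that the recomposed trace is feasible. Fix a locally maximal sub-TPO $\stpo_i \in \mathcal{M}$, which Theorem~\ref{thm:correctness} guarantees is returned. By activation consistency, either all of $\events_{s_i}$ is active under the realized valuation $\omega$, or none of it is. In the inactive case it is dropped in both formulations, so there is nothing to check. In the active case, insert $\tau_i^*$ at the time $e_{\mathrm{in}}^i$ fires in $\tau_H^*$. I then verify the semantics of Def.~\ref{def:ctpo_semantics} through Proposition~\ref{prop:timestamps}.
\begin{itemize}
\item Precedence holds because internal events are unordered with every outside event (sub-TPO isolation), and $e_{\mathrm{out}}$ is the only link to successors; the high-level bounds shifted by $d_i^*$ respect it.
\item Guards of internal events are satisfied by $\tau_i^*$. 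Clock isolation ensures that no outside guard reads a clock reset inside $\stpo_i$, and vice versa.
\item The valuation $\omega$ is unchanged, since internal nodes have no edges to proposition nodes (GTSP-TWPR isolation).
\end{itemize}
The makespan of the recomposed trace equals the high-level optimum, since $d_i^*$ is already accounted for in the adjusted travel costs.

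\textbf{Step 2 (no loss of optimality).} Let $\gamma^*$ be an optimal monolithic plan with trace $\tau^*$. I would show that its events of each active $\events_{s_i}$ form a contiguous block between entering $V_{l_{\mathrm{in}}}$ and leaving $V_{l_{\mathrm{out}}}$. GTSP-TWPR isolation gives this: internal nodes have edges only to nodes of events in $\events_{s_i}$, $V_{l_{\mathrm{out}}}$ has no incoming edge from outside, and there are no excursions through proposition nodes. Consequently the restriction of $\tau^*$ to the block is a feasible trace for $\stpo_i$, with duration $d_i \ge d_i^*$.

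Next, I contract each block to $e_{\mathrm{in}}^i$, with dwell $d_i^*$ in place of $d_i$. This yields a feasible trace for $\tpo_H$.
\begin{itemize}
\item Outside timing constraints involve only $e_{\mathrm{in}}$ and $e_{\mathrm{out}}$ (clock isolation), so they remain satisfiable: shortening the block only moves later events earlier, by $d_i - d_i^* \ge 0$.
\item Upper-bound and precedence-free windows between outside events and $e_{\mathrm{out}}$ are preserved, because they are encoded in the high-level bounds around $e_{\mathrm{in}}^i$ that the paper adjusts by $d_i^*$.
\end{itemize}
The contracted trace therefore has makespan at most $D(s^{\gamma^*})$. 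Hence $\mathrm{cost}(\tau_H^*) \le D(s^{\gamma^*})$, and combined with Step~1 we get equality.

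\textbf{Main obstacle.} The delicate point is Step~2's shift argument in the presence of lower-bound windows, i.e.\ $a_i$, $\ell_{ij}$, $f_{kl} > 0$, that couple events after the block with events before it. Shifting later events earlier could violate such a lower bound. I would handle this by not shifting, and instead absorbing the slack $d_i - d_i^*$ as waiting time after $e_{\mathrm{out}}$. This is permitted because dwell and waiting are allowed by the $\ge$ form of the travel constraint~\eqref{eq:travel}. With that choice, all outside timestamps are kept exactly, and feasibility is immediate.

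A second subtlety is that the conditional activation $\chi_{e}$ of the sub-TPO must be decided consistently at the high level. Activation consistency together with proposition isolation makes $\chi_{e_{\mathrm{in}}}$ a sufficient indicator for the whole block, so the high-level MILP correctly decides whether to insert $\tau_i^*$.
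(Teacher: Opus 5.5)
Your two-direction structure is a faithful expansion of the paper's argument, which simply asserts that, by the isolation conditions, traversing $\events_i$ costs $d_i^*$ in any global optimum, independently of the rest of the plan. The step where you try to justify this, however, fails.

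In $\tpo_H$ the exit event has no timestamp of its own: its timing relations with outside events are re-anchored at $t_{\mathrm{in}}+d_i^*$. So once you keep $t_{\mathrm{in}}$ and substitute $\tau_i^*$, the exit moves from $t_{\mathrm{in}}+d_i$ to $t_{\mathrm{in}}+d_i^*$, wherever you put the slack $d_i-d_i^*$. Your claim that ``all outside timestamps are kept exactly, and feasibility is immediate'' therefore does not cover $e_{\mathrm{out}}$. Clock isolation restricts only internal events, which leaves two failure modes.
\begin{enumerate}
\item $\guardmap(e_{\mathrm{out}})$ may lower-bound a clock reset outside $\events_s$. This fails as soon as $e_{\mathrm{out}}$ fires earlier, whether or not you shift.
\item An outside guard may upper-bound a clock in $\reset(e_{\mathrm{out}})$. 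This fails exactly under your waiting-after-$e_{\mathrm{out}}$ choice.
\end{enumerate}
Your bullet saying these windows are preserved ``because they are encoded in the high-level bounds adjusted by $d_i^*$'' is circular. Those adjusted bounds are precisely what the contracted trace violates when $d_i>d_i^*$.

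No redistribution of slack repairs this. Let an outside predecessor $e_p$ reset $c$, with $\guardmap(e_{\mathrm{in}})=(c\le 1)$ and $\guardmap(e_{\mathrm{out}})=(c\ge 10)$. Suppose $e_p$ can reach $e_{\mathrm{in}}$ within $1$ and the fastest block traversal takes $2$; Def.~\ref{def:stpo} allows all of this. The monolithic problem is feasible by dwelling inside the block. With $d_i^*=2$, however, the exit in $\tpo_H$ lies at $t_{\mathrm{in}}+2\le t_p+3<t_p+10$, so $\tpo_H$ is infeasible. More fundamentally, the block duration a global optimum needs here is $t_p+10-t_{\mathrm{in}}$. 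This depends on the outside schedule, so no single precomputed $d_i^*$ can be exact.

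What is missing is a lemma that some optimal monolithic plan traverses each active block in exactly $d_i^*$. Equivalently, the block must interact with the rest of the plan only through $t_{\mathrm{in}}$ and the travel out of $e_{\mathrm{out}}$. The paper's proof takes this as given, and the example shows it does not follow from conditions (a)--(d) alone. You therefore need an extra hypothesis, e.g., that no clock in $\guardmap(e_{\mathrm{out}})$ or $\reset(e_{\mathrm{out}})$ is used by any event outside $\events_s$. Under it (and the node caveat below), your waiting-after-$e_{\mathrm{out}}$ construction does go through, because every timestamp read by an outside constraint is then unchanged.

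The node caveat: your bound ``makespan at most $D(s^{\gamma^*})$'' also glosses over node choice. $\tau_i^*$ may enter and leave through different nodes of $V_{l_{\mathrm{in}}}$ and $V_{l_{\mathrm{out}}}$ than $\gamma^*$ does. That can increase the travel costs into and out of the contracted node, and, if event states also carry proposition labels, change $\omega$. The bound survives only if these groups are singletons or $d_i^*$ is computed per entry/exit node pair.
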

\begin{proof}
    By Conditions~\ref{cond: G isolocation}--\ref{cond: activation consistency} of Def.~\ref{def:stpo}, each ${\stpo}_i$ is isolated from $\ctpo$ beyond $e_{\mathrm{in}}^i, e_{\mathrm{out}}^i$, so the cost of traversing $\events_i$ is independent of the rest of the plan and $d_i^*$ equals what any globally optimal solution would incur on $\events_i$. Substituting $d_i^*$ into the high-level problem is thus exact, so solving $\tpo_H$ optimally yields $\tau_H^*$ with makespan equal to the monolithic optimum.
\end{proof}

\section{Experiments}\label{sec:experiments}

Here, we provide empirical evaluations of our cTPO framework in two stages: illustrative examples and benchmarks.
The examples are designed to illustrate interpretability of cTPOs, correctness of synthesized plans, and how a change in the environment changes optimal plans depending on proposition evaluations. 
The benchmarks compare the decomposition approach against the monolithic MILP on randomly generated problems.

Our tool is implemented in Python, uses Gurobi~\cite{gurobi} as the MILP solver, and integrates with ROS for robot deployment. All benchmarks were conducted on a MacBook Air M3 with 16 GB of RAM and robot experiments performed in ROS with RViz visualization. 




\subsection{Illustrative Examples}

\subsubsection{Warehouse Inspection}
We consider a robot inspecting three warehouse stations, Shelf~A ($e_2$), Shelf~B ($e_3$), and Shelf~C ($e_4$), and returning to its loading dock ($e_5$) after a mandatory charging event ($e_1$). The aisle layout contains an oil spill zone. If the robot traverses the spill zone, it must complete a cleaning sequence ($e_6, e_7, e_8$) before proceeding to the loading dock. 
The cTPO $\ctpo^\text{insp}$  is shown in Fig.~\ref{fig:warehouse_ctpo}, and the 
environments are shown in Fig.~\ref{fig:warehouse_shortcut}-\ref{fig:warehouse_long}, which
differ only in the position of Shelf~B relative to the spill zone.  The size of the robot's DTS and induced GTSP-TWPR graph, along with the MILP planning times are shown in Table~\ref{tab:case_study_sizes}.

Notice that  the environment topology influences the optimal trajectories and how they satisfy $\ctpo^\text{insp}$. Specifically, in Fig.~\ref{fig:warehouse_shortcut}, the location of Shelf B ($e_3$) enables the robot to optimally avoid the spill zone and satisfy $\ctpo^\text{insp}$ without triggering $p_h$, whereas in Fig.~\ref{fig:warehouse_long}, the optimal behavior is to pass through the spill zone (triggering $p_h$) and go through the three cleaning areas.

\begin{table}[t]
    \centering
    \caption{
    \small
    Robot's DTS $\dts$ and the induced GTSP-TWPR graph $\gtsptwpr$ sizes and MILP solve time (monolithic) 
    for the illustrative examples. Synthesis times are averaged over 3 trials. 
    }
    \label{tab:case_study_sizes}
    \resizebox{\columnwidth}{!}{%
    \begin{tabular}{l cc cc cc}
        \hline
        & \multicolumn{2}{c}{\underline{\qquad$\dts$ \qquad}} & \multicolumn{2}{c}{\underline{\qquad $\gtsptwpr$ \qquad}} & 
        Synth. \\
        & $|X|$ & $|\delta|$ & $|V|$ & $|E|$ &  Time (s)\\
        \hline
        Warehouse Fig.~\ref{fig:warehouse_shortcut} & 108 & 728 & 10 & 31 & 0.010 \\
        Warehouse Fig.~\ref{fig:warehouse_long} & 108 & 728 & 10 & 31 & 0.008\\
        Rover Fig.~\ref{fig:mars_no_outcrop} & 70 & 280 & 15 & 165 & 77.5 \\
        Rover Fig.~\ref{fig:mars_outcrop} & 70 & 280 & 14 & 139 & 575.6 \\
        \hline
    \end{tabular}}
\end{table}

\subsubsection{Mars Rover Atmospheric Science Mission}
We consider a Mars rover mission in which the rover must, after arriving at the landing site ($e_1$), heat and analyze a soil sample ($e_2 \prec e_3$) and, independently, take a measurement with instrument B ($e_4$), with both branches completing within 20 time units of each other. Then, it must return to its lander ($e_5$) within 50 time units of arrival. 
If the rover ever passes through an outcrop, before returning, it must ensure to have navigated to $e_6$ and collected a sample from $e_7$ within 30 time units. The cTPO representation of the task $\ctpo^\text{rover}$ is shown in Fig.~\ref{fig:rover_ctpo}.

For plan synthesis, we consider the environments in Figs.~\ref{fig:mars_no_outcrop}-\ref{fig:mars_outcrop}, differing in the placement and number of the outcrop regions relative to the lander. 
Table~\ref{tab:case_study_sizes} summarizes the problem size and plan synthesis times.
In Fig.~\ref{fig:mars_no_outcrop}, the optimal trajectory avoids the outcrop ($p_\text{o} = \bot$) and visits unconditional events only.
In Fig.~\ref{fig:mars_outcrop}, it is optimal for the rover to pass through the outcrop, setting $p_\text{o} = \top$ and executing $e_6$ and $e_7$ rather than avoiding it. In fact, it cleverly executes $e_6$ and $e_7$ before triggering $p_\text{o}$, anticipating its eventual activation along the trajectory.


\subsection{Benchmarks}

    

\begin{figure}[t]
    \centering
    \begin{subfigure}{0.95\linewidth}
        \centering
        \includegraphics[width=\linewidth]{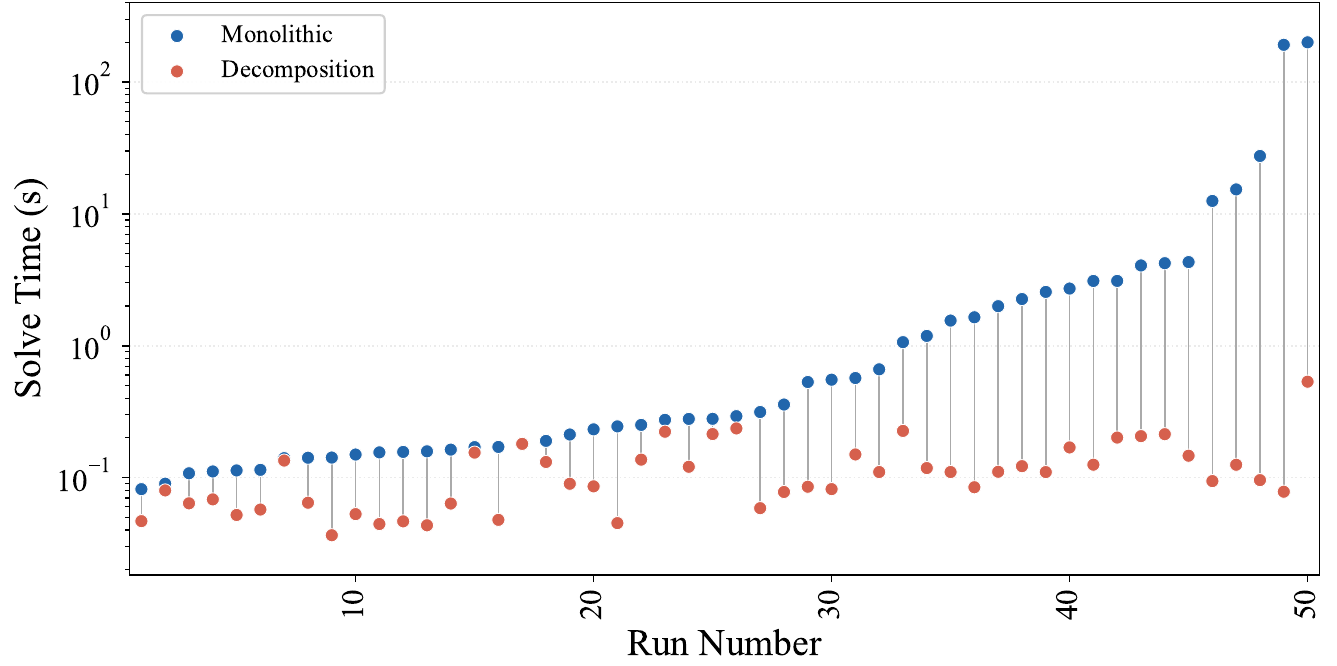}
        \vspace{-6mm}
        \caption{TPOs solve times across 50 runs.}
        \label{fig:plot_speedup_tpo}
    \end{subfigure}%
    
    \begin{subfigure}{0.95\linewidth}
        \centering
        \includegraphics[width=\linewidth]{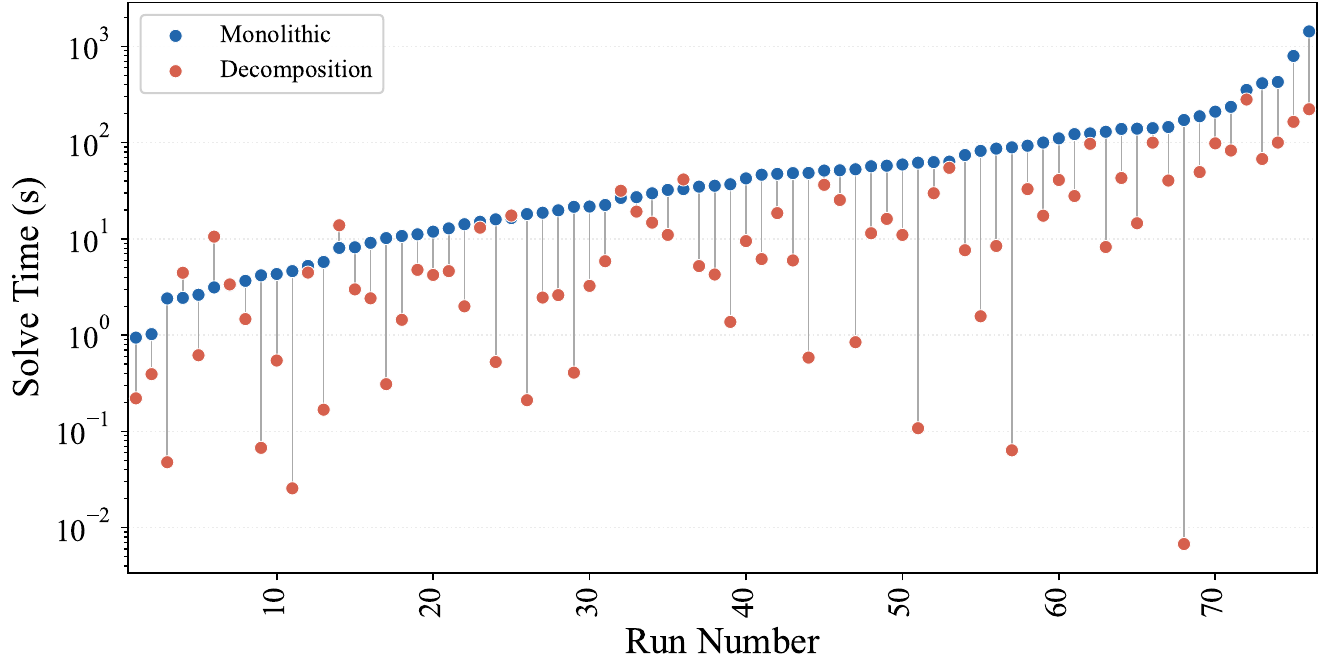}
        \vspace{-6mm}
        \caption{cTPO solve times across 76 runs.}
        \label{fig:plot_speedup_ctpo}
    \end{subfigure}
    \vspace{-1mm}
    \caption{
    \small
    Benchmark results of monolithic  MILP vs decomposition-based MILP.
    Each data point is averaged over 3 trials. 
    }
    \label{fig:decomp_results}
    \vspace{-1mm}
\end{figure}

\subsubsection{Sub-TPO Decomposition on Random TPOs}

This benchmark isolates the effect of decomposition from conditional structure by considering TPOs with no proposition nodes or conditional sub-TPOs.
%
We randomly generated TPOs, each with 50-60 events, until 50 feasible TPOs were obtained. 
We then solved each feasible instance using both the monolithic MILP and decomposition-based approaches. The results (averaged over 3 trials) are reported in Fig.~\ref{fig:plot_speedup_tpo}, with instances sorted by increasing monolithic solve time along the x-axis.

Note that as instance difficulty increases, the monolithic solve time grows by several orders of magnitude, while the decomposition-based solve time remains stable at roughly below $1$s. Consequently, the speedup increases with problem difficulty, exceeding $3$ orders of magnitude for the most complex instances. The sharpest increases in monolithic solve time correspond to instances with sparser timing constraints, which reduce MILP pruning and enlarge the feasible region, consistent with the complexity analysis in Sec.~\ref{sec:milp}.

\subsubsection{Sub-TPO Decomposition on Random cTPOs}
The second benchmark extends the evaluation to cTPOs with conditional sub-TPOs, totaling 13-22 events and 
4-13 atomic propositions.
Fig.~\ref{fig:plot_speedup_ctpo} shows the results across 76 feasible instances sorted by increasing monolithic solve time. 
Note that decomposition remains advantageous in over $90\%$ of instances, with speedups reaching $4$ orders of magnitude on the hardest instance. 
Unlike the TPO case, decomposition times also grow with problem difficulty, as the high-level MILP must jointly resolve proposition values and tour structure. The variability in speedup reflects the difficulty of this high-level MILP, which depends on the randomly generated proposition formulas and activation structure.

We finally note that, in all TPO and cTPO benchmarks, the decomposition-based method matched the monolithic solution cost (plan makespan), validating Theorem~\ref{thm:optimality}.


\section{Conclusion}
In this work, we introduced Conditional TPOs to address the expressiveness limitations of existing TPOs in robotic task planning. By incorporating generalized clock-difference guards and environment-dependent activation conditions, we enabled the specification of complex timing constraints between unordered events alongside conditionally activated task structures. To efficiently solve the resulting plan synthesis problem, we extended existing MILP formulations and introduced a novel sub-TPO decomposition algorithm. Our benchmarks and case studies demonstrate that this decomposition preserves plan optimality, yields computational speedups of up to 4 orders of magnitude for complex instances.
Future work includes extension of cTPOs multi-agent planning.

\bibliographystyle{IEEEtran}
\bibliography{references}

\end{document}